\crefname{theorem}{theorem}{Theorems}
\Crefname{Theorem}{Theorem}{Theorems}
\newtheorem*{lemma_nonumber*}{Lemma}
\newaliascnt{lemma}{theorem}
\crefname{lemma}{lemma}{lemmas}
\Crefname{Lemma}{Lemma}{Lemmas}
\newaliascnt{corollary}{theorem}
\crefname{corollary}{corollary}{corollaries}
\Crefname{Corollary}{Corollary}{Corollaries}
\newaliascnt{proposition}{theorem}
\newtheorem{proposition}[proposition]{Proposition}
\crefname{proposition}{proposition}{propositions}
\Crefname{Proposition}{Proposition}{Propositions}
\newaliascnt{definition}{theorem}
\crefname{definition}{definition}{definitions}
\Crefname{Definition}{Definition}{Definitions}
\newaliascnt{remark}{theorem}
\crefname{remark}{remark}{remarks}
\Crefname{Remark}{Remark}{Remarks}
\crefname{example}{example}{examples}
\Crefname{Example}{Example}{Examples}
\crefname{technique}{technique}{techniques}
\Crefname{Technique}{Technique}{Techniques}
\crefname{figure}{figure}{figures}
\Crefname{Figure}{Figure}{Figures}
\newtheorem{assumptionF}{\textbf{F}\hspace{-3pt}}
\Crefname{assumptionB}{\textbf{B}\hspace{-3pt}}{\textbf{B}\hspace{-3pt}}
\crefname{assumptionB}{\textbf{B}}{\textbf{B}}
\Crefname{assumptionC}{\textbf{C}\hspace{-3pt}}{\textbf{C}\hspace{-3pt}}
\crefname{assumptionC}{\textbf{C}}{\textbf{C}}
\Crefname{assumptionH}{\textbf{H}\hspace{-3pt}}{\textbf{H}\hspace{-3pt}}
\crefname{assumptionH}{\textbf{H}}{\textbf{H}}
\Crefname{assumptionT}{\textbf{T}\hspace{-3pt}}{\textbf{T}\hspace{-3pt}}
\crefname{assumptionT}{\textbf{T}}{\textbf{T}}
\Crefname{assumptionT}{\textbf{T}\hspace{-3pt}}{\textbf{T}\hspace{-3pt}}
\crefname{assumptionT}{\textbf{T}}{\textbf{T}}
\Crefname{assumptionL}{\textbf{L}\hspace{-3pt}}{\textbf{L}\hspace{-3pt}}
\crefname{assumptionL}{\textbf{L}}{\textbf{L}}
\Crefname{assumptionQ}{\textbf{Q}\hspace{-3pt}}{\textbf{Q}\hspace{-3pt}}
\crefname{assumptionQ}{\textbf{Q}}{\textbf{Q}}
\Crefname{assumptionAR}{\textbf{AR}\hspace{-3pt}}{\textbf{AR}\hspace{-3pt}}
\crefname{assumptionAR}{\textbf{AR}}{\textbf{AR}}
\def\hlf{\hat{\ell}^f}
\def\hlb{\hat{\ell}^b}
\def\contspace{\mathcal{C}}
\def\bpobs{\bar{p}_{\textup{obs}}}
\def\pobs{p_{\textup{obs}}}
\def\pjoin{p_{\textup{join}}}
\def\pjref{p_{\textup{jref}}}
\def\pdata{p_{\textup{data}}}
\def\pref{p_{\textup{ref}}}
\def\yobs{y^{\textup{obs}}}
\def\Pens{\mathscr{P}}
\newcommand{\schro}{Schr\"{o}dinger\xspace}
\newcommand{\tta}{\mathtt{A}}
\newcommand{\Capprox}{\tta}
\newcommandx\ctun[1][1=T]{\Capprox_{#1,1}}
\newcommandx{\expec}[2]{{\mathbb E}\left[#1 \middle \vert #2  \right]} 
\newcommandx{\norm}[2][1=]{\ifthenelse{\equal{#1}{}}{\left\Vert #2 \right\Vert}{\left\Vert #2 \right\Vert^{#1}}}
\newcommandx{\normLigne}[2][1=]{\ifthenelse{\equal{#1}{}}{\Vert #2 \Vert}{\Vert #2\Vert^{#1}}}
\def\bfc{\mathbf{c}}
\def\bfY{\mathbf{Y}}
\def\bfX{\mathbf{X}}
\def\tbfX{\tilde{\mathbf{X}}}
\def\hbfX{\hat{\mathbf{X}}}
\def\tbfY{\tilde{\mathbf{Y}}}
\def\hbfY{\hat{\mathbf{Y}}}
\def\bfZ{\mathbf{Z}}
\def\bfZ{\mathbf{Z}}
\def\bfB{\mathbf{B}}
\def\msa{\mathsf{A}}
\newcommand{\mcb}[1]{\mathcal{B}(#1)}
\def\mcy{\mathcal{Y}}
\def\mcx{\mathcal{X}}
\def\Mbb{\mathbb{M}}
\def\Pbb{\mathbb{P}}
\def\rset{\mathbb{R}}
\def\nset{\mathbb{N}}
\def\rmd{\mathrm{d}}
\def\rmc{\mathrm{C}}
\newcommandx{\functionspace}[2][1=+]{\mathbb{F}_{#1}(#2)}
\newcommand{\argmin}{\operatorname*{arg\,min}}
\newcommandx{\VarDeux}[3][3=]{\operatorname{Var}^{#3}_{#1}\left\{#2 \right\}}
\newcommand{\LeftEqNo}{\let\veqno\@@leqno}
\newcommand{\vois}{\mathcal{N}}
\newcommand{\N}{\ensuremath{\mathbb{N}}}
\newcommand{\PE}{\mathbb{E}}
\newcommand{\absLigne}[1]{\vert #1 \vert}
\newcommandx{\Vnorm}[2][1=V]{\| #2 \|_{#1}}
\newcommandx{\VnormEq}[2][1=V]{\left\| #2 \right\|_{#1}}
\newcommandx\probaMarkovTilde[2][2=]
\newcommand{\expeLigne}[1]{\PE [ #1 ]}
\newcommand{\expeMarkovLigne}[2]{\PE_{#1} [ #2 ]}
\def\ie{\textit{i.e.}}
\def\eqsp{\;}
\newcommand{\ccint}[1]{\left[#1\right]}
\newcommandx{\weight}[2][2=n]{\omega_{#1,#2}^N}
\newcommandx\sequence[3][2=,3=]
\newcommandx\sequenceD[3][2=,3=]
\newcommandx{\sequencen}[2][2=n\in\N]{\ensuremath{\{ #1_n, \eqsp #2 \}}}
\newcommandx\sequenceDouble[4][3=,4=]
\newcommandx{\sequencenDouble}[3][3=n\in\N]{\ensuremath{\{ (#1_{n},#2_{n}), \eqsp #3 \}}}
\def\eg{\textit{e.g.}}
\newcommand{\opnorm}[1]{{\left\vert\kern-0.25ex\left\vert\kern-0.25ex\left\vert #1
    \right\vert\kern-0.25ex\right\vert\kern-0.25ex\right\vert}}
\def\generator{\mathcal{A}}
\def\Id{\operatorname{Id}}
\newcommandx{\CPE}[3][1=]{{\mathbb E}_{#1}\left[#2 \middle \vert #3  \right]} 
\newcommandx{\CPELigne}[3][1=]{{\mathbb E}_{#1}[#2  \vert #3  ]} 
\newcommandx{\CPEsq}[3][1=]{{\mathbb{E}^{1/2}}_{#1}\left[#2 \middle \vert #3  \right]} 
\newcommandx{\CPVar}[3][1=]{\mathrm{Var}^{#3}_{#1}\left\{ #2 \right\}}
\newcommand{\CPP}[3][]
{\ifthenelse{\equal{#1}{}}{{\mathbb P}\left(\left. #2 \, \right| #3 \right)}{{\mathbb P}_{#1}\left(\left. #2 \, \right | #3 \right)}}
\newcommandx{\osc}[2][1=]{\mathrm{osc}_{#1}(#2)}
\def\Id{\operatorname{Id}}
\newcommand{\ensemble}[2]{\left\{#1\,:\eqsp #2\right\}}
\newcommand{\ensembleLigne}[2]{\{#1\,:\eqsp #2\}}
\newcommand\coupling[2]{\Gamma(\mu,\nu)}
\newcommandx{\KL}[2]{\operatorname{KL}\left( #1 | #2 \right)}
\newcommandx{\KLsqrt}[2]{\operatorname{KL}^{1/2}\left( #1 | #2 \right)}
\newcommandx{\Jef}[2]{\operatorname{J}\left( #1 , #2 \right)}
\newcommandx{\JefLigne}[2]{\operatorname{J}( #1 , #2 )}
\newcommandx{\KLLigne}[2]{\operatorname{KL}( #1 | #2 )}
\def\gaStep
\def\QKer{Q}
\def\distance{\mathbf{d}}
\newcommandx{\wasserstein}[3][1=\distance,3=]{\mathbf{W}_{#1}^{#3}\left(#2\right)}
\newcommandx{\wassersteinLigne}[3][1=\distance,3=]{\mathbf{W}_{#1}^{#3}(#2)}
\newcommandx{\wassersteinD}[1][1=\distance]{\mathbf{W}_{#1}}
\newcommandx{\wassersteinDLigne}[1][1=\distance]{\mathbf{W}_{#1}}
\def\sigmaD{\sigma^2}
\newcommandx{\phibfs}[1][1=]{\pmb{\varphi}_{\sigmaD_{#1}}}
\newcommandx\sequenceg[3][2=,3=]
\def\rmL{\mathrm{L}}
\newcommandx{\distV}[1][1=\bfc]{\mathbf{W}_{#1}}
\newcommandx{\distVdeux}[1][1=W_2]{\mathbf{d}_{#1}}
\DeclareMathOperator\erf{erf}
\begin{document}
\title{Conditional Simulation Using Diffusion Schr\"{o}dinger Bridges}
\author[1]{Yuyang~Shi}
\author[2]{Valentin~De~Bortoli}
\author[1]{George~Deligiannidis}
\author[1]{Arnaud~Doucet}

\affil[1]{%
    Department of Statistics\\
    University of Oxford, UK
}

\affil[2]{%
   ENS, PSL University, Paris, France 
}

\maketitle

\begin{abstract}
  Denoising diffusion models have recently emerged as a powerful class of generative
  models. They provide state-of-the-art results, not
  only for unconditional simulation, but also when used to solve conditional
  simulation problems arising in a wide range of inverse problems. A limitation of these models is that they are
  computationally intensive at generation time as they require simulating a
  diffusion process over a long time horizon. When performing unconditional
  simulation, a Schr\"odinger bridge formulation of generative modeling leads to
  a theoretically grounded algorithm shortening generation time which is
  complementary to other proposed acceleration techniques.  We extend the
  Schr\"odinger bridge framework to conditional simulation. We demonstrate this
  novel methodology on various applications including image super-resolution, 
  optimal filtering for state-space models and the refinement of pre-trained networks. Our code can be found at \href{https://github.com/vdeborto/cdsb}{\texttt{https://github.com/vdeborto/cdsb}}. 
\end{abstract}

\section{Introduction}\label{sec:intro}
\emph{Score-Based Generative Models} (SGMs), also known as denoising diffusion models, are a class of generative models that have become recently very popular as they provide state-of-the-art performance; see \eg~
\cite{chen2020wavegrad,ho2020denoising,song2020score,saharia2021image,nichol2021beatgans}.
Existing SGMs proceed as follows. First, noise is gradually added to the data using a time-discretized diffusion so
as to provide a sequence of perturbed data distributions eventually
approximating an easy-to-sample reference distribution, typically a multivariate Gaussian. Second, one approximates the corresponding time-reversed
denoising diffusion using neural network approximations of the logarithmic
derivatives of the perturbed data distributions known as scores; these
approximations are obtained using denoising score matching techniques
\citep{vincent2011connection, hyvarinen2005estimation}. Finally, the generative
model is obtained by initializing this reverse-time process using samples from
the reference distribution
\citep{ho2020denoising,song2020score}.

In many applications, one is not interested in unconditional simulation but the generative model is used as an implicit prior
$\pdata (x)$ on some parameter $X$ (e.g. image) in a Bayesian
inference problem with a likelihood function $g(\yobs|x)$ for observation
$Y=\yobs$. SGMs have been extended to address such tasks, see
\eg~\cite{song2020score,saharia2021image,batzolis2021conditional,tashiro2021csdi}. In
this conditional simulation case, one only requires being able to simulate from
the joint distribution of data and synthetic observations
$(X,Y)\sim \pdata (x)g(y|x)$. As in the unconditional case, the
time-reversal of the noising diffusion is approximated using neural network
estimates of its scores, the key difference being that this network admits not
only $x$ but also $y$ as an input. Sampling from the posterior
$p(x|\yobs) \propto \pdata (x) g(\yobs|x)$ is achieved by
simulating the time-reversal using the scores evaluated
at $Y=\yobs$.

However, performing unconditional or conditional simulation using SGMs is
computationally expensive as, to obtain a good approximation of the
time-reversed diffusion, one needs to run the forward noising diffusion long enough
to converge to the reference distribution. Many techniques have been proposed to
accelerate simulation including \eg~ knowledge distillation
\citep{luhman2021knowledge,salimans2022progressive}, non-Markovian forward process and subsampling
\citep{song2020denoising}, optimized noising diffusions and improved numerical solvers
\citep{jolicoeur2021gotta,dockhorn2021score,kingma2021variational,watson2022learning}. In
the unconditional scenario, reformulating generative modeling as a \schro bridge
(SB) problem provides a principled theoretical framework to accelerate
simulation time complementary to most other acceleration techniques
\citep{debortoli2021neurips}.
The SB solution is
the finite time process which is the closest in terms of Kullback--Leibler (KL)
discrepancy to the forward noising process used by SGMs but admits as marginals
the data distribution at time $t=0$ and the reference distribution at time
$t=T$. The time-reversal of the SB thus enables unconditional generation from
the data distribution.
However, the use of the SB formulation has not yet been developed in the context of conditional simulation. 

The contributions of this paper are as follows.
\begin{itemize}
\item We develop conditional SB (CSB), an original SB formulation for conditional simulation.    
\item By adapting the Diffusion SB algorithm of \cite{debortoli2021neurips} to
  our setting, we propose an iterative algorithm, Conditional Diffusion SB
  (CDSB), to approximate the solution to the CSB problem.

\item CDSB performance is demonstrated on various examples. In particular,
  we propose the first application of score-based techniques to optimal
  filtering in state-space models.
\end{itemize}

\section{Score-Based Generative Modeling}\label{sec:SGM}

    \subsection{Unconditional Simulation}
    \label{sec:discr-sett-mark}
    Assume we are given samples from some data distribution with positive
    density\footnote{We assume here that all
      distributions admit a positive density w.r.t. Lebesgue measure.} $\pdata$ on $\mathbb{R}^d$. Our aim is to
    provide a generative model to sample new data from $\pdata$.  SGMs 
    achieve this as follows. We gradually add noise to data samples, i.e. we
    consider a Markov chain $x_{0:N}=\{x_k\}_{k=0}^N \in \mcx = (\rset^d)^{N+1}$
    of joint density
\begin{equation}\label{eq:mu_forward}
           \textstyle{p(x_{0:N}) = p_0(x_0) \prod_{k=0}^{N-1}p_{k+1|k}(x_{k+1}|x_{k}),}
\end{equation}
where $p_0=\pdata$ and $p_{k+1|k}$ are Markov transition densities
inducing the following marginal densities
$p_{k+1}(x_{k+1})=\int
p_{k+1|k}(x_{k+1}|x_{k})p_{k}(x_{k})\textrm{d}x_{k}$. These transition densities
are selected such that $p_N(x_N) \approx \pref(x_N)$ for large $N$, where
$\pref$ is an easy-to-sample \emph{reference} density. In practice we set
$\pref(x_N)=\mathcal{N}(x_N;0,\Id)$, while 
$p_{k+1|k}(x_{k+1}|x_{k})=\mathcal{N}(x_{k+1};x_k-\gamma_{k+1}x_k;2 \gamma_{k+1} \Id)$
for $\gamma_k>0$, $\gamma_k \ll 1$ so $x_{0:N}$ is a time-discretized Ornstein--Uhlenbeck diffusion (see supplementary for details).

The main idea behind SGMs is to obtain samples from $p_0$ by exploiting the backward decomposition of \eqref{eq:mu_forward}  
\begin{equation}\label{eq:timereversal}
  \textstyle{
    p(x_{0:N}) = p_N(x_N)\prod_{k=0}^{N-1}p_{k|k+1}(x_{k}|x_{k+1}),
    }
\end{equation}
i.e.\ by sampling $X_N\sim p_N(x_N)$ then sampling $X_k\sim p_{k|k+1}(x_k|X_{k+1})$
for $k \in \{N-1, \dots, 0\}$, we obtain $X_0 \sim p_0(x_0)$.  In practice, we know neither $p_N$ nor the backward transition densities $p_{k|k+1}$ for
$k\in\{0,...,N-1\}$ and therefore this ancestral sampling procedure cannot be implemented
exactly. We thus approximate $p_N$ by $\pref$ and $p_{k|k+1}$ using a Taylor expansion approximation
\begin{equation}
\textstyle{p_{k|k+1}(x_k|x_{k+1})\approx  \mathcal{N}(x_k;B_{k+1}(x_{k+1}), 2\gamma_{k+1}  \Id),}
\end{equation}
where $B_{k+1}(x)=x+\gamma_{k+1} \{x + 2 \nabla \log p_{k+1}(x)\}$. Finally, we
approximate the score terms $\nabla\log p_{k}$ using denoising score matching
methods \citep{hyvarinen2005estimation,vincent2011connection,song2020score}. Since $p_{k}(x_{k})=\int p_{0}(x_{0})p_{k|0}(x_{k}|x_{0})\textrm{d}x_{0}$, it
follows that $\nabla \log p_{k}(x_{k})=\mathbb{E}[\nabla_{x_{k}} \log
        p_{k|0}(x_{k}|X_0)]$, 
where the expectation is w.r.t. to the distribution of $X_0$ given $x_{k}$.  We learn a neural network approximation
$\mathbf{s}_{\theta^\star}(k,x_k) \approx \nabla \log p_{k}(x_k)$ by minimizing
w.r.t. $\theta$ the loss
\begin{equation}\label{eq:scorematching}
\textstyle{\mathbb{E}[\sum_{k=1}^{N} \lambda_k ||\mathbf{s}_\theta(k, X_{k})-\nabla_{x_k} \log p_{k|0}(X_{k}|X_{0})||^2] },
\end{equation}
where $\lambda_k>0$ is a weighting coefficient \citep{ho2020denoising,song2020score} and the expectation is w.r.t. $p(x_{0:N})$. Once we have estimated
$\theta^\star$ from noisy data, we start by first sampling
$X_N \sim \pref(x_N)$ and then sampling
$X_k \sim \hat{p}_{k|k+1}(x_k|X_{k+1})$ for $\hat{p}_{k|k+1}$ as in
$p_{k|k+1}$ but with $\nabla \log p_{k+1}(X_{k+1})$ replaced by
$\mathbf{s}_{\theta^\star}(k+1,X_{k+1})$.  Under regularity assumptions, the resulting $X_0$ can be
shown to be approximately distributed according to $p_0=\pdata$ if $p_N\approx \pref$ \cite[Theorem
1]{debortoli2021neurips}.
        
\subsection{Conditional Simulation}\label{sec:condSGMs}
We now consider the scenario where we have samples from $p_0=\pdata$ and are
interested in generating samples from the posterior
$p(x|\yobs) \propto p_0(x) g(\yobs|x)$ for some observation $Y=\yobs \in
\mcy$. Here it is assumed that it is possible to sample synthetic observations from
$Y|(X=x)\sim g(y|x)$ but the expression of $g(y|x)$ might not be available.

In this case, conditional SGMs (CSGMs) proceed as follows; see
e.g. \cite{saharia2021image,batzolis2021conditional,li2022srdiff,tashiro2021csdi}. For any
realization $Y=y$, we consider a Markov chain of the form \eqref{eq:mu_forward}
but initialized using $X_0 \sim p(x|y)$ instead of $p_0(x)$. Obviously it is not
possible to simulate this chain but this will not prove necessary. This chain
induces for $k\geq 0$ the marginals denoted $p_{k+1}(x_{{k+1}}|y)$ which satisfy
$p_{k+1}(x_{{k+1}}|y)=\int
p_{{k+1}|k}(x_{k+1}|x_{k})p_{k}(x_{k}|y)\textrm{d}x_{k}$ for $p_0(x_0|y)=p(x_0|y)$. Similarly to the
unconditional case, to perform approximate ancestral sampling from this Markov
chain, we need to sample from
$p_{k|k+1}(x_{k}|x_{k+1},y)\approx \mathcal{N}(x_{k};B_{k+1}(x_{k+1},y), 2\gamma_{k+1} \Id)$ where 
$B_{k+1}(x,y)=x + \gamma_{k+1} \{x+2 \nabla \log p_{k+1}(x|y)\}$. We can again
estimate these score terms using
\begin{equation}
        \nabla \log p_{k}(x_{k}|y)=\mathbb{E}[\nabla_{x_{k}} \log p_{k|0}(x_{k}|X_{0})],
\end{equation}
where the expectation is w.r.t. to the distribution of $X_0$ given
$(X_{k},Y)=(x_{k},y)$.  In this case, we learn again a neural network approximation
$\mathbf{s}_{\theta^\star}(k,x_k,y) \approx \nabla \log p_{k}(x_k|y)$ by minimizing
w.r.t. $\theta$ the loss
\begin{equation}\label{eq:scorematchingcond}
        \textstyle{\mathbb{E}[\sum_{k=1}^{N} \lambda_k ||\mathbf{s}_\theta(k, X_{k},Y)-\nabla_{x_k} \log p_{k|0}(X_{k}|X_{0})||^2] },
\end{equation}
where the expectation is w.r.t. $p(x_{0:N})g(y|x_0)$ which we can sample
from.  Once the neural network is trained, we simulate from the
posterior $p(x|\yobs) \propto p_0(x) g(\yobs|x)$ for any observation $Y=\yobs$
 as follows: sample first $X_N \sim \pref(x_N)$ and then         $X_k \sim \hat{p}_{k|k+1}(x_k|X_{k+1},\yobs)$ where this density is
similar to $p_{k|k+1}(x_k|X_{k+1},\yobs)$ but with
$\nabla \log p_{k+1}(X_{k+1}|\yobs)$ replaced by
$\mathbf{s}_{\theta^\star}(k+1,X_{k+1},\yobs)$. The resulting sample $X_0$ will
be approximately distributed according to $p(x|\yobs)$. This scheme can
be seen as an amortized variational inference
procedure.

\begin{figure*}
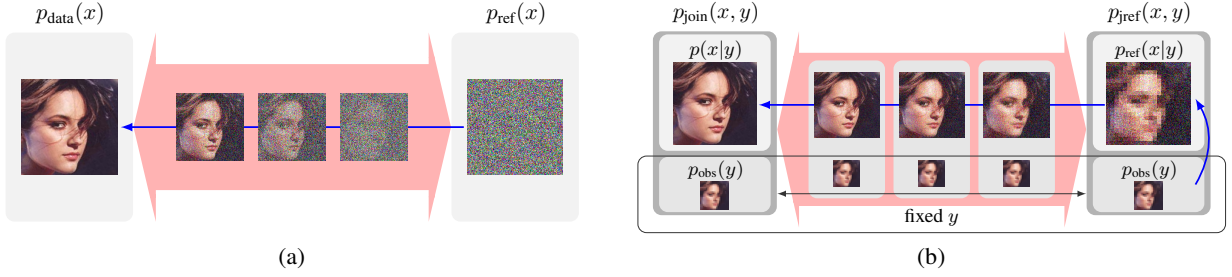

\centering
\subfloat[\label{fig:sbdiagram}]{
\resizebox{.46\textwidth}{!}{
\begin{tikzpicture}[%
  block/.style = {rounded corners, fill=gray!10, align=center, text width=2cm, minimum height=3cm, inner sep=0}]

\node[text width=2cm,align=center] (note1) {$\pdata(x)$}; 
\node[block,below of=note1,yshift=-0.8cm] (tray1) {\includegraphics[width=.75\textwidth]{Plots/Diagram/2u/x.png}};
\node[right of=note1,xshift=6cm,text width=2cm,align=center] (note2) {$\pref(x)$};
\node[block,below of=note2,yshift=-0.8cm] (tray2) {\includegraphics[width=.75\textwidth]{Plots/Diagram/2u/x_y.png}};

\draw[draw=red!30,line width=2cm,{Triangle[width=3cm,length=10pt]}-{Triangle[width=3cm,length=10pt]}] (tray1)  -- (tray2); 
\draw[draw=blue,latex-, thick] ([xshift=-0.2cm] tray1.east)  -- ([xshift=0.2cm] tray2.west) 
        node[midway,text=black]{\includegraphics[width=30pt]{Plots/Diagram/2u/x_5.png} \, \includegraphics[width=30pt]{Plots/Diagram/2u/x_10.png} \, \includegraphics[width=30pt]{Plots/Diagram/2u/x_15.png}};

\node[] () [below = 41pt] at (tray1.east) {};
\end{tikzpicture} 
}
} \quad
\subfloat[\label{fig:csbdiagram}]{
\resizebox{.46\textwidth}{!}{
\begin{tikzpicture}[
  block/.style = {rounded corners, fill=gray!10, align=center, text width=2cm, minimum height=3cm, inner sep=0}]

\node[text width=2cm,align=center] (note1) {$\pjoin(x,y)$}; 
\node[block,below of=note1,yshift=-0.8cm, fill=gray!60] (tray1) {};
\node[block,below of=note1,yshift=-0.3cm,minimum height=1.9cm, text width=1.8cm] (tray1x) {\small{$p(x|y)$}\\\includegraphics[width=.75\textwidth]{Plots/Diagram/2c/x.png}};
\node[block,below of=note1,yshift=-1.8cm, fill=gray!20,minimum height=0.9cm, text width=1.8cm] (tray1y) {\small{$\pobs(y)$}\\\includegraphics[width=.25\textwidth]{Plots/Diagram/2c/y.png}};

\node[right of=note1,xshift=6cm,text width=2cm,align=center] (note2) {$\pjref(x,y)$};
\node[block,below of=note2,yshift=-0.8cm, fill=gray!60] (tray2) {};
\node[block,below of=note2,yshift=-0.3cm,minimum height=1.9cm, text width=1.8cm] (tray2x) {\small{$\pref(x|y)$}\\\includegraphics[width=.75\textwidth]{Plots/Diagram/2c/x_y.png}};
\node[block,below of=note2,yshift=-1.8cm, fill=gray!20,minimum height=0.9cm, text width=1.8cm] (tray2y) {\small{$\pobs(y)$}\\\includegraphics[width=.25\textwidth]{Plots/Diagram/2c/y.png}};

\draw[draw=blue,-latex, thick] ([xshift=-0.15cm] tray2y.east)  to[bend right]  ([shift=({-0.15cm,-0.1cm})] tray2x.east); 

\draw[draw=red!30,line width=2.5cm,{Triangle[width=3.2cm,length=10pt]}-{Triangle[width=3.2cm,length=10pt]}] ([yshift=-0.1cm] tray1.east)  -- ([yshift=-0.1cm] tray2.west) 
        node[pos=0.225,rounded corners, text=black,fill=black!10,minimum height=2.25cm,text width=1cm,align=center]{
        \vspace{1.45cm}
        \\
        \includegraphics[width=12pt]{Plots/Diagram/2c/y.png}}
        node[pos=0.5,rounded corners, text=black,fill=black!10,minimum height=2.25cm,text width=1cm,align=center]{
        \vspace{1.45cm}
        \\
        \includegraphics[width=12pt]{Plots/Diagram/2c/y.png}}
        node[pos=0.775,rounded corners, text=black,fill=black!10,minimum height=2.25cm,text width=1cm,align=center]{
        \vspace{1.45cm}
        \\
        \includegraphics[width=12pt]{Plots/Diagram/2c/y.png}};

\draw[draw=blue,latex-, thick] ([shift=({-0.2cm,-0.2cm})] tray1x.east)  -- ([shift=({0.2cm,-0.2cm})] tray2x.west) 
        node[pos=0.255,text=black]{\includegraphics[width=30pt]{Plots/Diagram/2c/x_5.png}}
        node[pos=0.5,text=black]{\includegraphics[width=30pt]{Plots/Diagram/2c/x_10.png}}
        node[pos=0.745,text=black]{\includegraphics[width=30pt]{Plots/Diagram/2c/x_15.png}};

\draw[draw=black!80,latex-latex] ([shift=({0.1cm,-0.15cm})] tray1y.east)  -- ([shift=({-0.1cm,-0.15cm})] tray2y.west)
        node[midway, below, yshift=-0.125cm] {\footnotesize{fixed $y$}};

\node[above,rounded corners,draw=black!80,text width=9.25cm,minimum height=1.25cm,align=center] at (current bounding box.south) {};

\end{tikzpicture}
}
}

\caption{(a) An unconditional \schro bridge (SB) between $\pdata(x)$ and $\pref(x)$; (b) our proposed conditional \schro bridge (CSB) on the extended space between $\pjoin(x,y)$ and $\pjref(x,y)$. The blue arrows denote the direction of the generative procedure at simulation time. }

\end{figure*}

\section{\schro Bridges and Generative Modeling}
\label{sec:schro-bridges}

For SGMs to work well, we must diffuse the process long enough so that 
$p_N \approx \pref$. The SB methodology introduced in
\citep{debortoli2021neurips} allows us to mitigate this problem.  We refer to
\citet{chen2020optimal} for recent reviews on
the SB problem. We first
recall how the SB problem can be applied to perform unconditional
simulation.  

Consider the \emph{forward} density $p(x_{0:N})$ given by \eqref{eq:mu_forward}, describing the process adding noise to the data.  We want to find the joint density $\pi^\star(x_{0:N})$ such
that
\begin{equation}
  \label{eq:discrete_schro}
 \textstyle{\pi^\star = \argmin_{\pi} \ensemble{\KLLigne{\pi}{p}}{ \pi_0=\pdata,~ \pi_N= \pref}},
\end{equation}
where $\pi_0$,
resp. $\pi_N$, is the marginal of $X_0$, resp. $X_N$, under $\pi$.  
A visualization of the SB problem \eqref{eq:discrete_schro} is provided in Figure \ref{fig:sbdiagram}. 
Were $\pi^\star$ available, we would obtain a generative model by ancestral sampling: sample $X_N \sim \pref(x_N)$, then $X_k \sim \pi^\star_{k|k+1}(x_k | X_{k+1})$ for
$k \in \{N-1, \dots, 0\}$. 

The SB problem does not admit a closed-form solution
but it can be solved numerically using Iterative Proportional Fitting (IPF)
\citep{kullback1968probability}. This algorithm defines the following recursion initialized at $\pi^0=p$ given in
\eqref{eq:mu_forward}:
\begin{align}\label{eq:IFPrecursion}
  &\textstyle{\pi^{2n+1} = \argmin_{\pi} \ensemble{\KLLigne{\pi}{\pi^{2n}}}{\pi_N = \pref},} \\
  &\textstyle{\pi^{2n+2} =  \argmin_{\pi}  \ensemble{\KLLigne{\pi}{\pi^{2n+1}}}{\pi_0 = \pdata}. }
\end{align}
\cite{debortoli2021neurips,vargas2021solving} showed that the IPF iterates admit a representation suited to numerical approximation. Indeed, if we denote $p^n=\pi^{2n}$ and $q^n=\pi^{2n+1}$, then $p^0(x_{0:N})=p(x_{0:N})$ and
\begin{align}
     & \textstyle{q^n(x_{0:N}) = \pref(x_N) \prod_{k=0}^{N-1} q^n_{k|k+1}(x_k|x_{k+1}),}\\
     & \textstyle{p^{n+1}(x_{0:N}) = \pdata(x_0) \prod_{k=0}^{N-1} p^{n+1}_{k+1|k}(x_{k+1}|x_{k}),}
\end{align} 
where 
$q^n_{k|k+1} = p^n_{k|k+1}$  and
$p^{n+1}_{k+1|k} = q^n_{k+1|k}$. 
To summarize, at step $n=0$, 
$q^0$ is the backward
process obtained by reversing the dynamics of $p^0$ initialized at time $N$
from $\pref$. The forward process $p^1$ is then obtained from the reversed dynamics
of $q^0$ initialized at time $0$ from $\pdata$, and so on. Note that $q^0$ corresponds to the unconditional SGM described in \Cref{sec:discr-sett-mark}. 

\subsection{Diffusion \schro Bridge}

Similarly to SGMs, one can approximate the time-reversals appearing in the IPF iterates using score matching ideas. If
$p^{n}_{k+1|k}(x'|x)= \mathcal{N}(x';x+\gamma_{k+1} f^{n}_k(x),2 \gamma_{k+1}
\Id)$, with $f^0_k(x)=-x$, we approximate the reverse-time
transitions by
$q^n_{k|k+1}(x|x') \approx \mathcal{N}(x;x'+ \gamma_{k+1}
b^{n}_{k+1}(x'), 2\gamma_{k+1} \Id)$, where
$b^{n}_{k+1}(x') = -f^{n}_{k}(x')+2 \nabla \log p^{n}_{k+1}(x')$; and next
$p^{n+1}_{k+1|k}(x'|x) \approx \mathcal{N}(x';x+ \gamma_{k+1}
f^{n+1}_{k}(x), 2\gamma_{k+1} \Id)$, where
$f^{n+1}_{k}(x) = -b^{n}_{k+1}(x)+2 \nabla \log q^{n}_{k}(x)$. The drifts
$b^n_{k+1},f^{n+1}_k$ could be estimated by approximating
$\{\nabla \log p^{i}_{k+1}(x)\}_{i=0}^{n}$,
$\{\nabla \log q^{i}_{k}(x)\}_{i=0}^{n}$ using score matching. However this is too expensive both in terms of compute and memory. \cite{debortoli2021neurips} instead
directly approximate the mean of the Gaussians using neural networks, $\mathbf{B}_{\theta}$ and $\mathbf{F}_{\phi}$, by
generalizing the score matching approach, i.e.
$ q_{k|k+1}^n(x|x') = \mathcal{N}(x;\mathbf{B}_{\theta^n}(k+1,x'), 2 \gamma_{k+1}
\Id)$ and
$p_{k+1|k}^n(x'|x) = \mathcal{N}(x';\mathbf{F}_{\phi^n}(k,x), 2\gamma_{k+1} \Id)$,
where $\theta^n$ is obtained by minimizing
\begin{equation} \textstyle{\ell^{b}_n(\theta)=\mathbb{E}_{p^{n}}[\sum_{k}\normLigne{\mathbf{B}_\theta(k+1,X_{k+1})-G_{n,k}(X_k,X_{k+1})}^2]},
\end{equation}
for $G_{n,k}(x,x')=x'+\mathbf{F}_{\phi^n}(k,x)-\mathbf{F}_{\phi^n}(k,x')$, 
and $\phi^{n+1}$ by minimizing 
\begin{equation} \textstyle{\ell^{f}_{n+1}(\phi)=\mathbb{E}_{q^{n}}[\sum_{k}\normLigne{\mathbf{F}_\phi(k,X_k)-H_{n,k}(X_k,X_{k+1})}^2]},
\end{equation}
for $H_{n,k}(x,x')=x + \mathbf{B}_{\theta^n}(k+1,x')-\mathbf{B}_{\theta^n}(k+1,x)$.
This implementation of IPF, referred to as Diffusion SB (DSB), is presented in
the supplementary; see \cite{vargas2021solving,chen2021likelihood} for alternative
numerical schemes. After we have learned $\theta^L$ using $L$ DSB iterations, we
sample $X_N \sim \pref(x_N)$ and then set
$X_{k} = \mathbf{B}_{\theta^L}(k+1, X_{k+1})+ \sqrt{2 \gamma_{k+1}} Z_{k+1}$ with
$Z_k \overset{\textup{i.i.d.}}\sim \mathcal{N}(0,\Id)$ to obtain
$X_0$ approximately distributed from $\pdata$.

\subsection{Link With Optimal Transport}
\label{subsec:linkwithot}
It can be shown that the solution $\pi^\star$ of the SB problem \eqref{eq:discrete_schro}, $\pi^\star(x_{0:N})=\pi^{s,\star}(x_0,x_N)p_{|0,N}(x_{1:N-1}|x_0,x_N)$ where $\pi^{s,\star}(x_0,x_N)$ is the marginal of $\pi^\star(x_{0:N})$ at times $0$ and $N$. In this case, \eqref{eq:discrete_schro} reduces to the static SB problem
\begin{equation}
 \label{eq:static_schro}
 \textstyle{\pi^{s,\star} = \argmin_{\pi^s} \ensemble{\KLLigne{\pi^s}{p_{0,N}}}{ \pi^s_0=\pdata,~ \pi^s_N= \pref}}.
\end{equation}
The static SB problem can be interpreted as an entropy-regularized optimal transport problem between $\pdata$ and $\pref$, with regularized transportation cost $\mathbb{E}_{\pi^s} [-\log p_{N|0}(X_N|X_0)] - H(\pi^s)$. 
When $p_{N|0}(x_N|x_0) = \mathcal{N} (x_N ; x_0, \sigma^2)$ as in \cite{song2019generative}, the transportation cost $-\log p_{N|0}(x_N|x_0)$ reduces to the quadratic cost $\frac{1}{2\sigma^2}\|x_0-x_n\|^2$ up to a constant. In other words, the static SB solution $\pi^{s,\star}$
not only transports samples $X_N\sim\pref$ into samples from the data distribution $\pdata$, but also seeks to minimize an entropy-regularized Wasserstein distance of order $2$.
The regularization strength is controlled by the variance $\sigma^2$. 
Similar properties hold for the time-discretized Ornstein--Uhlenbeck diffusion defined by \eqref{eq:mu_forward} in Section \ref{sec:discr-sett-mark}.

\section{Conditional Diffusion \schro Bridge}
\label{sec:cond-simul-sb}

We now want to use SBs for conditional simulation, i.e.\ to be able sample from
a posterior distribution $p(x|\yobs) \propto \pdata(x) g(\yobs|x)$ assuming only
that it is possible to sample $(X,Y)\sim \pdata (x) g(y|x)$.  In
this case, an obvious approach would be to consider the SB problem where we
replace $\pdata(x)$ by the posterior $p(x|\yobs)$, i.e.
\begin{equation}\label{eq:discrete_schro-cond1}
 \textstyle{\pi^{\star}= \argmin_{\pi} \ensemble{\KLLigne{\pi}{p_{\yobs}}\hspace{-.15cm}}{ \hspace{-.15cm} \pi_0=p(\cdot|\yobs),~ \pi_N= \pref}, }
\end{equation}
where $p_{\yobs}(x_{0:n}):=p(x_0|\yobs) \prod_{k=0}^{N-1}p_{k+1|k}(x_{k+1}|x_k)$ is the forward noising process. However, DSB is not applicable here as it requires sampling from $p(x_{0}|\yobs)$ at step $0$.

We propose instead to solve an amortized problem. Let us introduce
$\pjoin(x,y)=\pdata(x)g(y|x)=p(x|y)\pobs(y)$ and $\pjref(x,y)=\pref(x)\pobs(y)$
where $\pobs(y)=\int \pdata(x)g(y|x) \rmd x$. We are interested in finding the
transition kernel $\pi^{c,\star} = (\pi^{c,\star}_y)_{y \in \mcy}$,
where
$\pi^{c,\star}_{y}$ defines a distribution on $\mcx = (\rset^d)^{N+1}$ for each
$y \in \mcy$, satisfying
\begin{align}
&\pi^{c,\star}= \text{argmin}_{\pi^c} \{\mathbb{E}_{Y \sim \pobs}[\textup{KL}(\pi_{Y}^c||p_{Y})]: \\
&\qquad \qquad  \pi_{0}^c\otimes \pobs= \pjoin,~ \pi_{N}^c \otimes \pobs =\pjref\}.\label{eq:SBuncondreformulated}
\end{align}
This corresponds to an averaged version of
\eqref{eq:discrete_schro-cond1} over the distribution $\pobs(y)$ of
$Y$. The first constraint
$\pi^{c,\star}_{y,0}(x_0)\pobs(y)=\pjoin(x_0,y)=p(x_0|y)\pobs(y)$ ensures that
$\pi^{c,\star}_{y,0}(x_0)=p(x_0|y)$,  $\pobs$-almost surely. Similarly
$\pi^{c,\star}_{y,N}(x_N) = \pref(x_N)$, $\pobs$-almost surely.  Hence,
to obtain a sample from $p(x|\yobs)$ for a given $Y=\yobs$, we can sample
$X_N \sim \pref(x_N)$ then
$X_k|X_{k+1} \sim \pi^{c,\star}_{\yobs,k|k+1}(x_k|X_{k+1})$ for $k=N-1,...,0$
and $X_0$ is a sample from $p(x|\yobs)$.

We show here that \eqref{eq:SBuncondreformulated} can be reformulated as a SB on
an extended space, which we will refer to as Conditional SB (CSB), so the theoretical results for existence and uniqueness of the solution to the SB problem apply. 
\begin{proposition}\label{prop:SBreformulation} Consider the following SB problem
\begin{align}
 \mkern-18mu \bar{\pi}^\star = \textup{argmin}_{\bar{\pi}} \{&\textup{KL}(\bar{\pi}|\bar{p}):  \text{s.t.} \ \label{eq:conditionalSBextended}\bar{\pi}_0= \pjoin,~ \bar{\pi}_N =\pjref\},
\end{align}
where we define $\bar{p}(x_{0:N},y_{0:N}):=p_{y_0}(x_{0:N})\bpobs(y_{0:N})$ with
$\bpobs(y_{0:N}):=\pobs(y_0) \prod_{k=0}^{N-1}\delta_{y_k}(y_{k+1})$ and $p_{y_0}$ is the forward process defined below \eqref{eq:discrete_schro-cond1}. If $\KLLigne{\bar{\pi}^\star}{\bar{p}}<+\infty$ then $\bar{\pi}^\star=\pi^{c,\star} \otimes \bpobs$ where $\pi^{c,\star}$ solves
\eqref{eq:SBuncondreformulated}. 
\end{proposition}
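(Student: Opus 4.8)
The plan is to exploit the degeneracy of the reference $\bar{p}$ in the $y$-coordinates: since $\bpobs$ freezes $y_1=\dots=y_N=y_0$ through the Dirac factors, the extended SB \eqref{eq:conditionalSBextended} should decouple into the averaged family of conditional problems \eqref{eq:SBuncondreformulated}, and the $\otimes$ structure of the claimed minimizer should drop out of a chain-rule decomposition of the relative entropy.

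First I would observe that $\KLLigne{\bar{\pi}}{\bar{p}}<\infty$ forces $\bar{\pi}\ll\bar{p}$, and since $\bar{p}$ is supported on the diagonal $\{y_0=y_1=\dots=y_N\}$ (because of the factors $\delta_{y_k}(y_{k+1})$), so is $\bar{\pi}$. Hence $\bar{\pi}$ is fully determined by the law $\nu$ of $y_0$ and the conditional laws $\pi^c_{y_0}$ of $x_{0:N}$ given $y_0$, i.e. $\bar{\pi}(x_{0:N},y_{0:N})=\nu(y_0)\,\pi^c_{y_0}(x_{0:N})\prod_{k=0}^{N-1}\delta_{y_k}(y_{k+1})$. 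As $\bar{p}(x_{0:N},y_{0:N})=p_{y_0}(x_{0:N})\,\pobs(y_0)\prod_{k=0}^{N-1}\delta_{y_k}(y_{k+1})$ carries the \emph{same} Dirac factors, these cancel in the Radon--Nikodym derivative, and the chain rule for relative entropy gives
\begin{equation}
\KLLigne{\bar{\pi}}{\bar{p}}=\KLLigne{\nu}{\pobs}+\mathbb{E}_{y\sim\nu}\big[\KLLigne{\pi^c_y}{p_y}\big].
\end{equation}

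Next I would translate the two marginal constraints. Since the $y_0$-marginal of $\pjoin$ is $\pobs$, the constraint $\bar{\pi}_0=\pjoin$ forces $\nu=\pobs$ and, after disintegration, $\pi^c_{y,0}=p(\cdot|y)$ for $\pobs$-a.e.\ $y$; likewise $\bar{\pi}_N=\pjref$ forces $\pi^c_{y,N}=\pref$ for $\pobs$-a.e.\ $y$ (consistently with $\nu=\pobs$, as $y_N=y_0$ on the support). In particular $\KLLigne{\nu}{\pobs}=0$ on every feasible $\bar{\pi}$, so minimizing $\KLLigne{\bar{\pi}}{\bar{p}}$ over feasible $\bar{\pi}$ amounts exactly to minimizing $\mathbb{E}_{y\sim\pobs}[\KLLigne{\pi^c_y}{p_y}]$ subject to $\pi^c_0\otimes\pobs=\pjoin$ and $\pi^c_N\otimes\pobs=\pjref$, which is \eqref{eq:SBuncondreformulated}. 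The map $\bar{\pi}\leftrightarrow(\nu,\pi^c)$ being a bijection between feasible sets that preserves both objective and constraints, the optimizers correspond, whence $\bar{\pi}^\star=\pi^{c,\star}\otimes\bpobs$.

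The main obstacle is the careful handling of the singular $y$-dynamics: rigorously justifying that finiteness of the KL confines $\bar{\pi}$ to the diagonal $\{y_0=\dots=y_N\}$ and that the common Dirac factors cancel so the decomposition above holds. I expect the clean way to do this is to disintegrate both $\bar{\pi}$ and $\bar{p}$ along the projection $(x_{0:N},y_{0:N})\mapsto(x_{0:N},y_0)$, on which both measures reduce to genuine densities and the chain rule is standard. Once this reduction is established, matching the reduced objective and constraints to \eqref{eq:SBuncondreformulated} is immediate.
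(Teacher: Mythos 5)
Your proposal is correct and follows essentially the same route as the paper's proof: both reduce the extended problem via the chain rule for KL over the $y$-component (your support/absolute-continuity argument plays the role of the paper's transfer-theorem step forcing $\bar{\pi}_{\textup{obs}} = \bpobs$), translate the marginal constraints into $\nu = \pobs$, $\pi^c_{y,0} = p(\cdot|y)$, $\pi^c_{y,N} = \pref$, and conclude by matching minimizers (your feasible-set bijection versus the paper's two-inequality argument with uniqueness). The differences are cosmetic, not substantive.
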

 We provide an illustration of the CSB problem \eqref{eq:conditionalSBextended} in Figure \ref{fig:csbdiagram}. 
 Under $\bar{p}$, the $Y$-component is sampled at time $0$ according to $\pobs$ and then is kept constant until time $N$ while the $X$-component is initialized at $p(x|y_0)$ and then diffuses according to $p_{k+1|k}(x_{k+1}|x_k)$.
 
Contrary to \eqref{eq:discrete_schro-cond1}, we can adapt DSB to solve numerically the CSB problem  \eqref{eq:conditionalSBextended} as both the distributions $\pjoin$ and
 $\pjref$ can be sampled. The
resulting algorithm is called Conditional DSB (CDSB). It approximates the following IPF recursion 
\begin{align}\label{eq:IFPrecursion}
  &\textstyle{\bar{\pi}^{2n+1} = \argmin_{\bar{\pi}} \ensemble{\KLLigne{\bar{\pi}}{\bar{\pi}^{2n}}}{\bar{\pi}_N = \pjref},} \\
  &\textstyle{\bar{\pi}^{2n+2} =  \argmin_{\bar{\pi}}  \ensemble{\KLLigne{\bar{\pi}}{\bar{\pi}^{2n+1}}}{\bar{\pi}_0 = \pjoin}}
\end{align}
initialized at $\bar{\pi}^0=\bar{p}$. For $\bar{p}^n=\bar{\pi}^{2n}$ and $\bar{q}^n=\bar{\pi}^{2n+1}$, we have the following representation of the IPF iterates.
\begin{proposition}
  \label{prop:IPFrecursion}
  Assume that $\KLLigne{\pjoin \otimes \pjref}{\bar{p}_{0,N}} < +\infty$. Then
  we have $\bar{p}^0(x_{0:N},y_{0:N})=\bar{p}(x_{0:N},y_{0:N})$ and for any
  $n>0$, $\bar{q}^n(x_{0:N},y_{0:N})=\bpobs(y_{0:N})\bar{q}^n(x_{0:N}|y_{N})$,
  $\bar{p}^{n+1}(x_{0:N},y_{0:N})= \bpobs(y_{0:N})\bar{p}^{n+1}(x_{0:N}|y_0)$
  with 
 \begin{align}
      &\textstyle{\bar{q}^n(x_{0:N}|y_{N})= \pref(x_N)\prod_{k=0}^{N-1} \bar{p}^n_{k|k+1}(x_k|x_{k+1},y_{N}),} \\
    &\textstyle{\bar{p}^{n+1}(x_{0:N}|y_0)= p(x_0|y_0) \prod_{k=0}^{N-1} \bar{q}^{n}_{k+1|k}(x_{k+1}|x_{k},y_{0}).}
  \end{align} 
\end{proposition}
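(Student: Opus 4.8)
The plan is to argue by induction on $n$, with the engine being the standard \emph{half-bridge} (one-step IPF) formula combined with the observation that the frozen $Y$-structure of $\bar p$ is preserved by every KL projection. First I would record the half-bridge formula. Writing $z_k=(x_k,y_k)$ for the full state, the disintegration/chain rule gives $\KLLigne{\bar\pi}{\bar\mu}=\KLLigne{\bar\pi_N}{\bar\mu_N}+\mathbb{E}_{\bar\pi_N}[\KLLigne{\bar\pi_{0:N-1|N}}{\bar\mu_{0:N-1|N}}]$, so in $\argmin_{\bar\pi}\{\KLLigne{\bar\pi}{\bar\mu}:\bar\pi_N=\rho\}$ the constraint pins $\bar\pi_N=\rho$, making the first term constant, while the second is minimized (to zero) by keeping the path-given-endpoint conditional of $\bar\mu$. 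Hence the minimizer is $\bar\pi^\star(z_{0:N})=\rho(z_N)\,\bar\mu_{0:N-1|N}(z_{0:N-1}|z_N)$, and symmetrically $\bar\pi^\star(z_{0:N})=\rho(z_0)\,\bar\mu_{1:N|0}(z_{1:N}|z_0)$ for a constraint at time $0$.

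Next I set up the induction. The base case $\bar p^0=\bar p$ holds by definition, and writing $\bar p(x_{0:N},y_{0:N})=\bpobs(y_{0:N})\,p_{y_0}(x_{0:N})$ already exhibits the frozen $Y$-structure (all $y_k$ equal to a common value $\sim\pobs$), with $x$-conditional $p_{y_0}(x_{0:N})$. For the odd step $\bar q^n=\argmin\{\KLLigne{\cdot}{\bar p^n}:\bar\pi_N=\pjref\}$ I apply the half-bridge formula at time $N$. Assuming inductively $\bar p^n(z_{0:N})=\bpobs(y_{0:N})\,\bar p^n(x_{0:N}|y_0)$, conditioning on $z_N=(x_N,y_N)$ forces $y_{0:N-1}=y_N$; together with $\pjref(z_N)=\pref(x_N)\pobs(y_N)$ and the identity $\pobs(y_N)\prod_{k}\delta_{y_N}(y_k)=\bpobs(y_{0:N})$ (the diagonal measure is the same whether anchored at $y_0$ or $y_N$), this yields $\bar q^n(z_{0:N})=\bpobs(y_{0:N})\,\pref(x_N)\,\bar p^n(x_{0:N-1}|x_N,y_N)$. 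Reading off the $x$-conditional and using that the time reversal of a Markov chain is Markov gives the product form with backward transitions $\bar p^n_{k|k+1}(x_k|x_{k+1},y_N)$. The even step producing $\bar p^{n+1}$ is entirely symmetric: I apply the half-bridge formula at time $0$ with $\pjoin(z_0)=p(x_0|y_0)\pobs(y_0)$, use the frozen structure of $\bar q^n$, and inherit the forward transitions $\bar q^n_{k+1|k}(x_{k+1}|x_k,y_0)$; since $y_0=y_N$ on the diagonal, $p(x_0|y_0)=p(x_0|y_N)$ matches the stated expression.

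The hard part will be the rigorous treatment of the singular $Y$-component: because $\bpobs$ is supported on the diagonal $\{y_0=\cdots=y_N\}$ through its Dirac factors, the conditionals above are degenerate, and one must justify that every projection keeps all mass on this diagonal so the disintegrations are well-defined. This is exactly where the hypothesis $\KLLigne{\pjoin\otimes\pjref}{\bar{p}_{0,N}}<+\infty$ enters: it forces each IPF iterate to have finite KL divergence to its predecessor, hence to be absolutely continuous with respect to it, so the frozen-$Y$ support propagates from $\bar p^0=\bar p$ throughout the recursion. A secondary point to handle with care is the matching of the two equivalent anchorings of $\bpobs$ (the common value read off at $y_0$ versus $y_N$), since this is precisely what lets the backward iterate $\bar q^n$ condition on $y_N$ and the forward iterate $\bar p^{n+1}$ condition on $y_0$ while referring to the same frozen observation.
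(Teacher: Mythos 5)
Your proposal is correct and takes essentially the same route as the paper's proof: both rest on the KL chain rule/disintegration, on the fact that finite KL (hence absolute continuity) forces every admissible competitor to retain the frozen diagonal $Y$-structure of $\bar{p}$, and on the half-bridge characterization of the minimizer (keep the conditional given the constrained endpoint, swap in the prescribed marginal), finished off by the backward Markov factorization. The only difference is organizational — you apply the half-bridge formula once on the extended $(x,y)$-space and then unpack the degenerate $y$-conditionals, whereas the paper first disintegrates over the $y$-trajectory and then runs the endpoint argument conditionally on $y$ — so the two arguments are the same in substance.
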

Here we simplify notation and write $Y$ for all the random variables
$Y_0,Y_1,...,Y_N$ as they are all equal almost surely under $\bar{p}^{n}$ and
$\bar{q}^{n}$. We approximate the transition kernels as in DSB and refer to the
supplementary for more details. In particular, the transition kernels satisfy
$\bar{q}^n_{k|k+1}(x|x',y)=
\mathcal{N}(x;\mathbf{B}^{y}_{\theta^n}(k+1,x'),2\gamma_{k+1} \Id)$ and
$\bar{p}_{k+1|k}^n(x'|x,y) = \mathcal{N}(x';\mathbf{F}^{y}_{\phi^n}(k,x),
2\gamma_{k+1} \Id)$, where $\theta^n$ is obtained by minimizing
\begin{equation} \textstyle{\ell^{b}_n(\theta)=\mathbb{E}_{\bar{p}^{n}}[\sum_{k}\normLigne{\mathbf{B}_\theta^Y(k+1,X_{k+1})-G_{n,k}^Y(X_{k},X_{k+1})}^2]}\label{eq:regressionbcond}
\end{equation}
for $G_{n,k}^y(x,x')=x'+\mathbf{F}^{y}_{\phi^n}(k,x)-\mathbf{F}^{y}_{\phi^n}(k,x')$ and $\phi^{n+1}$ by minimizing 
\begin{equation} \textstyle{\ell^{f}_{n+1}(\phi)=\mathbb{E}_{\bar{q}^{n}}[\sum_{k}\normLigne{\mathbf{F}_\phi^Y(k,X_k)-H_{n,k}^Y(X_k,X_{k+1})}^2]}\label{eq:regressionfcond},
\end{equation}
$H_{n,k}^y(x,x')=x + \mathbf{B}^{y}_{\theta^n}(k+1,x')-\mathbf{B}^{y}_{\theta^n}(k+1,x)$.

The resulting CDSB scheme is summarized in \Cref{algo:ipf_score_cond} where $Z^j_k,\tilde{Z}^j_k \overset{\textup{i.i.d.}}\sim \mathcal{N}(0,\Id)$. After $L$ iterations of CDSB, we have learned $\theta^L$. For any observation
$Y=\yobs$, we can then sample $X_N \sim \pref(x_N)$ and then compute
$X_{k} = \mathbf{B}_{\theta^L}^{\yobs}(k+1, X_{k+1})+ \sqrt{2 \gamma_{k+1}} Z_{k+1}$ with
$Z_k \overset{\textup{i.i.d.}}\sim \mathcal{N}(0,\Id)$ for $k=N-1,...,0$.  The resulting sample
$X_0$ will be approximately distributed from $p(x|\yobs)$.
\begin{algorithm}
    \caption{Conditional Diffusion \schro Bridge}
    \label{algo:ipf_score_cond}
    \begin{algorithmic}[1] 
      \FOR{$n \in \{0, \dots,L\}$} \WHILE{not converged}
      \STATE Sample $\{X^j_{k}\}_{k,j=0}^{N,M},\{Y^j\}_{j=0}^{M}$ where\\ $X^j_0 \sim \pdata, Y^j \sim g(\cdot|X^j_0)$, and \\
      $X^{j}_{k+1} = \mathbf{F}_{\phi^n}^{Y^j}(k, X^{j}_{k})+\sqrt{2
        \gamma_{k+1}} Z^{j}_{k+1}$ 
        \STATE Compute $\hlb_n(\theta^n)$ approximating \eqref{eq:regressionbcond}
        \STATE $\theta^{n} \leftarrow \textrm{Gradient Step}(\hlb_n(\theta^n))$ 
      \ENDWHILE \WHILE{not
        converged}
      \STATE Sample $\{X^j_{k}\}_{k,j=0}^{N,M}$,  $\{Y^j\}_{j=0}^{M}$ where\\ $X^j_N \sim \pref, Y^j\sim \pobs$, and \\
      $X^j_{k}=\mathbf{B}_{\theta^n}^{Y^j}(k+1, X^{j}_{k+1})+\sqrt{2 \gamma_{k+1}}
      \tilde{Z}^{j}_{k+1}$ 
      \STATE Compute $\hlf_{n+1}(\phi^{n+1})$ approximating \eqref{eq:regressionfcond}
      \STATE
      $\phi^{n+1} \leftarrow \textrm{Gradient Step}(\hlf_{n+1}(\phi^{n+1}))$
      \ENDWHILE \ENDFOR \STATE \textbf{Output: } $(\theta^{L},\phi^{L+1})$
    \end{algorithmic}
  \end{algorithm}
\section{CDSB Improvements}
\subsection{Conditional Reference Measure}
\label{subsec:targetawareinitial}
In standard SGMs and for the unconditional SB, we typically select
$\pref(x)=\mathcal{N}(x;0,\sigma_{\textup{ref}}^{2}\Id)$. However, initializing
ancestral sampling from random noise to eventually obtain samples from $p(x|y)$
can be inefficient as $y$ already contains useful information about $X$. Fortunately, it is easy to use a joint reference measure of the form
$\pjref(x,y)=\pref(x|y)\pobs(y)$ instead of $\pjref(x,y)=\pref(x)\pobs(y)$ in CSB and CDSB. The
only modification in \Cref{algo:ipf_score_cond} is that line 8 becomes
$Y^j \sim \pobs(y), X^j_N \sim \pref(x|Y^j)$.  In some interesting scenarios, we
can select $\pref(x|y)$ as an approximation to $p(x|y)$ in order
to accelerate the sampling process. This means we construct a CSB between $p(x|y)$ and its approximation $\pref(x|y)$, instead of between $p(x|y)$ and noise. We refer to this extension of CDSB as
CDSB-C.

As a simple example, consider obtaining super-resolution (SR) image samples from a low-resolution image $Y=y$. Assume that $y$
has been suitably upsampled to have the same dimensionality as $X$. In this
case, $y$ itself can serve as an approximate
initialization for sampling $X_N$. A simple model is to take
$\pref(x|y)=\vois(x;y,\sigma_{\textup{ref}}^{2} \Id)$ with $\sigma_{\textup{ref}}^{2}=\rho \sigma_{x|y}^{2}$, where $\rho$ is a variance
inflation parameter and $\sigma_{x|y}^{2}$ is an estimate of the conditional variance of $X$ given $Y$. 
See Figure \ref{fig:csbdiagram} for an illustration. 
In our experiments, we also explore other $\pref(x|y)$ obtained using the Ensemble Kalman Filter (EnKF) as well as neural network models.

\subsection{Conditional Forward Process}
\label{subsec:targetawareforward}
To accelerate the convergence of IPF, we also have the flexibility
to make the initial forward noising process dynamics dependent on $Y=y$, i.e. 
$p_{y}(x_{0:N})=p(x_{0}|y)\prod_{k=0}^{N-1}p_{k+1|k}(x_{k+1}|x_{k},y)$.
As shown below, it is  beneficial to initialize $p_{y}$ close to the CSB solution $\pi^{c,\star}_y$.
\begin{proposition}\label{prop:fasterconverence}
  For any $n \in \nset$ with $n \geq 1$, we have 
\begin{equation}
  \expeLigne{\KLLigne{\pi^{c,n}_{Y,0}}{p(\cdot|Y)}}
   \leq \tfrac{2}{n} \expeLigne{\KLLigne{\pi^{c,\star}_Y}{p_Y}} ,
\end{equation}
where for any $n \in \nset$, $\bar{\pi}^n = \bpobs \otimes \pi^{c,n}$ is the
$n^{\textup{th}}$ IPF iterate and the expectations are w.r.t. $Y \sim \pobs$.
\end{proposition}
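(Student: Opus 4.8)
The plan is to reduce the claim to the \emph{unconditional} Schr\"odinger bridge problem on the extended space provided by \Cref{prop:SBreformulation}, and then run the classical information-projection analysis of IPF. By \Cref{prop:SBreformulation} and \Cref{prop:IPFrecursion} the iterates $\bar{\pi}^n = \bpobs \otimes \pi^{c,n}$ are exactly the alternating $\operatorname{KL}$-projections of $\bar{p}$ onto the two affine constraint sets $\mathcal{C}_0 = \{\bar{\pi} : \bar{\pi}_0 = \pjoin\}$ and $\mathcal{C}_N = \{\bar{\pi} : \bar{\pi}_N = \pjref\}$, and the bridge $\bar{\pi}^\star$ lies in $\mathcal{C}_0 \cap \mathcal{C}_N$. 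Since $\bpobs$ has all time-marginals equal to $\pobs$, disintegrating in $y$ yields the translation identities $\KL{\bar{\pi}^n_0}{\pjoin} = \expeLigne{\KL{\pi^{c,n}_{Y,0}}{p(\cdot|Y)}}$, $\KL{\bar{\pi}^n_N}{\pjref} = \expeLigne{\KL{\pi^{c,n}_{Y,N}}{\pref}}$, and $\KL{\bar{\pi}^\star}{\bar{p}} = \expeLigne{\KL{\pi^{c,\star}_Y}{p_Y}}$. It therefore suffices to prove $\KL{\bar{\pi}^n_0}{\pjoin} + \KL{\bar{\pi}^n_N}{\pjref} \le \tfrac{2}{n}\KL{\bar{\pi}^\star}{\bar{p}}$ purely for the extended iterates.

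First I would telescope. Because $\mathcal{C}_0$ and $\mathcal{C}_N$ are linear families, Csisz\'ar's Pythagorean relation holds with equality, so testing against the common point $\bar{\pi}^\star \in \mathcal{C}_0 \cap \mathcal{C}_N$ gives $\KL{\bar{\pi}^\star}{\bar{\pi}^m} = \KL{\bar{\pi}^\star}{\bar{\pi}^{m+1}} + \KL{\bar{\pi}^{m+1}}{\bar{\pi}^m}$. Summing over $m = 0, \dots, n-1$ and dropping the nonnegative tail yields $\sum_{m=0}^{n-1}\KL{\bar{\pi}^{m+1}}{\bar{\pi}^m} \le \KL{\bar{\pi}^\star}{\bar{p}}$. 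Writing $a_m := \KL{\bar{\pi}^{m+1}}{\bar{\pi}^m}$, each projection merely reweights a single marginal, so $a_m$ reduces to a one-marginal gap, e.g.\ $a_{2m} = \KL{\pjref}{\bar{\pi}^{2m}_N}$ and $a_{2m+1} = \KL{\pjoin}{\bar{\pi}^{2m+1}_0}$.

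Next I would control the quantity in the statement, the forward gap $g_n := \KL{\bar{\pi}^n_0}{\pjoin} + \KL{\bar{\pi}^n_N}{\pjref}$, by the preceding increment. At step $n$ the freshly fixed marginal is exact, so one summand of $g_n$ vanishes; for the surviving summand (say marginal $N$ for even $n$, where $\bar{\pi}^{n-1}_N = \pjref$) the data-processing inequality under marginalization gives $\KL{\bar{\pi}^n_N}{\pjref} = \KL{\bar{\pi}^n_N}{\bar{\pi}^{n-1}_N} \le \KL{\bar{\pi}^n}{\bar{\pi}^{n-1}} = a_{n-1}$, and symmetrically for odd $n$; hence $g_n \le a_{n-1}$.

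The main obstacle is passing from the summed bound $\sum_{m<n} a_m \le \KL{\bar{\pi}^\star}{\bar{p}}$ to a bound on the single increment $a_{n-1}$, and this is where the factor $2$ should come from. I expect to resolve it by showing the increments are non-increasing along each parity class: chaining one Pythagorean step (using that $\bar{\pi}^m$ and $\bar{\pi}^{m+2}$ fix the \emph{same} marginal) with one data-processing step (using $\bar{p}_0 = \pjoin$ to seed the recursion) should give $a_{m+1} \le \KL{\bar{\pi}^m}{\bar{\pi}^{m+1}} \le a_{m-1}$. Consequently $a_{n-1}$ is the smallest of the $\lceil n/2 \rceil$ increments of its parity occurring in the sum, so $\lceil n/2 \rceil\, a_{n-1} \le \sum_{m<n} a_m \le \KL{\bar{\pi}^\star}{\bar{p}}$ and thus $a_{n-1} \le \tfrac{2}{n}\KL{\bar{\pi}^\star}{\bar{p}}$. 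Combining $g_n \le a_{n-1}$ with the translation identities of the first step then delivers the stated inequality.
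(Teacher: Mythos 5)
Your proposal is correct, but it takes a genuinely different route from the paper's for the core estimate. The paper's proof is short: it reduces the conditional statement to the extended space exactly as you do (existence of $\pi^{c,n}$ with $\bar{\pi}^n = \bpobs \otimes \pi^{c,n}$ argued as in \Cref{prop:IPFrecursion}, then the KL chain rule of L\'eonard to get the three translation identities), and then obtains the inequality $\KLLigne{\bar{\pi}^n_0}{\pjoin} + \KLLigne{\bar{\pi}^n_N}{\pjref} \le \tfrac{2}{n}\KLLigne{\bar{\pi}^\star}{\bar{p}}$ by directly citing \cite[Corollary 1]{leger2020gradient}. You instead reprove that inequality from first principles, and your argument checks out: Csisz\'ar's Pythagorean identity holds with equality for I-projections onto marginal-constraint sets (this can be verified directly from the product form $\bar{\pi}^{m+1} = \bar{\pi}^m_{|N} \otimes \pjref$ of the projection), so telescoping gives $\sum_{m<n} a_m \le \KLLigne{\bar{\pi}^\star}{\bar{p}}$ with $a_m = \KLLigne{\bar{\pi}^{m+1}}{\bar{\pi}^m}$; the bound $g_n \le a_{n-1}$ follows by data processing since the surviving marginal of $\bar{\pi}^{n-1}$ equals the corresponding target; and the monotonicity chain $a_{m+1} \le \KLLigne{\bar{\pi}^m}{\bar{\pi}^{m+1}} \le a_{m-1}$ is valid --- the first link by minimality of the projection defining $\bar{\pi}^{m+2}$ over a set containing $\bar{\pi}^m$, and the second link because the projection preserves the conditional, so that (say for even $m$) $\KLLigne{\bar{\pi}^m}{\bar{\pi}^{m+1}} = \KLLigne{\bar{\pi}^m_N}{\pjref} = \KLLigne{\bar{\pi}^m_N}{\bar{\pi}^{m-1}_N} \le a_{m-1}$ by data processing, with $\bar{p}_0 = \pjoin$ seeding the case $m=1$; the parity count $\left\lceil n/2 \right\rceil a_{n-1} \le \sum_{m<n} a_m$ then delivers the factor $2/n$. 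Notably, all KL directions come out right, which is where such arguments typically fail. The trade-off: the paper's route is two lines modulo an external reference; yours is self-contained, elementary (only Pythagorean identities, projection minimality and data processing), and makes transparent where the constant $2$ originates, at the minor cost of having to justify the projection identities, which hold here under the standing assumptions $\KLLigne{\bar{\pi}^\star}{\bar{p}} < +\infty$ and $\KLLigne{\pjoin \otimes \pjref}{\bar{p}_{0,N}} < +\infty$ of \Cref{prop:SBreformulation} and \Cref{prop:IPFrecursion}.
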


As a result, we should choose the initial forward noising process $p_y$ such that its
terminal marginal $p_{y,N}$ targets $\pref(\cdot|y)$. However, contrary to
diffusion models, we recall that our framework does not strictly  require
$p_{y,N}\approx \pref(\cdot|y)$ to provide approximate samples from the
posterior of interest.

For tractable $\pref(x|y)$, we can define
$p_{y}(x_{0:N})$ using an unadjusted Langevin dynamics; i.e.
$p_{k+1|k}(x'|x,y)=\mathcal{N}(x';x+\gamma_{k+1}\nabla\log\pref(x|y),2\gamma_{k+1}\Id)$. In the case $\pref(x|y)=\mathcal{N}(x;\mu(y),\sigma^{2}(y)\Id)$, this reduces to a discretized Ornstein--Uhlenbeck process admitting $\pref(x|y)$ as limiting distribution as $\gamma\to0$ and $N\to\infty$ \citep{durmus2017nonasymp}.

\subsection{Forward-Backward Sampling}

When we use an unconditional $\pref(x)$, our
proposed method also shares connections with the conditional transport methodology
developed by \cite{marzouk2016sampling,spantini2019coupling}. They propose
methods to learn a deterministic invertible transport map
$\mathcal{S}(x,y):\mathcal{X}\times\mathcal{Y}\to\mathcal{X}$ which maps samples
from $p(x|y)$ to $\pref(x)$. To sample from $p(x|\yobs)$, one samples
$X^{\textup{ref}}\sim\pref(x)$, then transports back the sample through the
inverse map
$X^{\textup{pos}}={\mathcal{S}}(\cdot,\yobs)^{-1}(X^{\textup{ref}})$. 

As noted by \cite{spantini2019coupling}, an alternative method to sample from
$p(x|\yobs)$ consists of first sampling
$(X,Y)\sim\pjoin$, then following the two-step
transformation
$\hat{X}^{\textup{ref}}=\mathcal{S}(X,Y),~~
\hat{X}^{\textup{pos}}=\mathcal{S}(\cdot,\yobs)^{-1}(\hat{X}^{\textup{ref}})$.  By
definition of $\mathcal{S}$, $\hat{X}^{\textup{ref}}$ is also distributed
according to $\pref$.  However, since the transport map ${\mathcal{S}}$ may be
imperfect in practice, this sampling strategy provides the advantage of
cancellation of errors between $\mathcal{S}$ and $\mathcal{S}(\cdot,\yobs)^{-1}$.

We also explore an analogous forward-backward sampling scheme in our
framework, which first samples $(X,Y)\sim\pjoin$,  
followed by sampling  $\hat{X}_{N}\sim \bar{p}^{L}_{N|0}(x_{N}|X,Y)$ through the forward half-bridge, then $\hat{X}_{0}\sim
\bar{q}^{L}_{0|N}(x_{0}|\hat{X}_{N},\yobs)$ through the backward half-bridge.
Since $\bar{q}^{L}$ is the approximate time-reversal of $\bar{p}^{L}$, this
strategy shares similar advantages as the method of \cite{spantini2019coupling}
when the half-bridge $\bar{q}^{L}(x_{0:N}|\yobs)$ does not solve the CSB problem
exactly. We call this extension CDSB-FB.
     
\section{Related work}

\textbf{Approximate Bayesian computation (ABC)}, also known as
likelihood-free inference, has been developed to approximate the posterior when
the likelihood is intractable but one can simulate synthetic data from it; see
\eg \citep{beaumont2019approximate}. However, these
methods typically require knowing the prior, while CDSB only needs to have access to
joint samples and learns about the posterior directly. For tasks such as image inpainting, the prior is indeed implicit. 

\textbf{\schro bridges} techniques to perform both static and sequential Bayesian inference for state-space models have been developed by \cite{bernton2019schr} and \cite{reich2018data}. However, these methods require being able to evaluate pointwise an unnormalized version of the target posterior distribution contrary to the CDSB-based methods developed here.

\textbf{Conditional transport}. Performing conditional simulation by
learning a transport map between joint distributions on $X,Y$ having the same
$Y$-marginals (as $\pjoin$ and $\pref$) has been first proposed by
\cite{marzouk2016sampling}. Various techniques have
been subsequently developed to approximate such maps such as polynomial or radial basis
representations \citep{marzouk2016sampling,baptista2020adaptive}, Generative
Adversarial Networks \citep{kovachki2021conditional,zhou2021deep} or normalizing
flows \citep{kruse2021hint}. 
CDSB also fits into this framework, but instead utilizes stochastic transport maps. 
Recently, \cite{taghvaei2022optimal} have also proposed independently using conditional transport ideas to perform optimal filtering for state-space models.

\textbf{Conditional SGMs}. SGMs have been applied to perform posterior
simulation, primarily for images, as described in Section
\ref{sec:condSGMs} and references therein.
An alternative line of work for image editing
\citep{song2019generative,choi2021ilvr,chung2021comecloserdiffusefaster,meng2022sdedit} 
utilizes the denoising property of SGMs to iteratively denoise noisy versions of
a reference image $y$ while restricted to retain particular features of $y$. 
However, $\pref(x)=\mathcal{N}(x;0,\sigma_{\textup{ref}}^{2}\Id)$ so image generation is started
from noise and typically hundreds or thousands of refinement steps are required. Our framework can incorporate in a principled way information given by $y$ in the reverse process's initialization (see Section \ref{subsec:targetawareinitial}). Recently \cite{zheng2022truncated,lu2022conditional} have also proposed suitable choices for $\pref(x)$ or $\pref(x|y)$ to shorten the diffusion process. In comparison, the CDSB framework is more flexible and allows for general $\pref(x|y)$ which can be non-Gaussian and different from the initial forward diffusion's terminal distribution $p_N(x_N|y)$. For instance, we explore 
using noiseless pre-trained super-resolution models as $\pref(x|y)$ in Section \ref{subsubsec:nongaussianref}, where CDSB further improves the SR samples closer to the data distribution. 
Finally, for linear Gaussian inverse
problems, \cite{kadkhodaie2021stochastic,kawar2021snips,kawar2022denoising} develop efficient methodologies using unconditional SGMs when the linear degradation model and the Gaussian noise level are known.

\textbf{SGM acceleration techniques}.  Many techniques have been proposed to
accelerate SGMs and CSGMs. For example, \cite{luhman2021knowledge,
  salimans2022progressive} propose to learn a distillation network on top of SGM
models, while \cite{song2020denoising} perform a subsampling of the timesteps in
a variational setting. \cite{watson2022learning} optimize the timesteps with a
fixed budget using dynamic programming.  
\cite{xiao2021tackling} perform multi-steps denoising
using GANs while \cite{dockhorn2021score} consider underdamped Langevin dynamics
as forward process. We emphasize that many of these techniques are complementary
to and can be readily applied in the SB setting; \eg  one could distill the last CDSB network $\mathbf{B}^{y}_{\theta^L}$.  Additionally, SB
and CSB provide a framework to perform few-step sampling.

\section{Experiments}
\label{sec:experiments}
\begin{figure}[h]
\centering
\begin{minipage}{\linewidth}
\raisebox{0.8cm}{\rotatebox[origin=t]{90}{CSGM}}\enskip{}\includegraphics[height=2.13cm]{\string"Plots/2D/Example 1/b_1_cond_histogram__50\string".png}\includegraphics[height=2.13cm]{\string"Plots/2D/Example 2/b_1_cond_histogram__50\string".png}\includegraphics[height=2.13cm]{\string"Plots/2D/Example 3/b_1_cond_histogram__50\string".png}

\raisebox{0.8cm}{\rotatebox[origin=t]{90}{CDSB}}\enskip{}\includegraphics[height=2.13cm]{\string"Plots/2D/Example 1/b_5_cond_histogram__50\string".png}\includegraphics[height=2.13cm]{\string"Plots/2D/Example 2/b_5_cond_histogram__50\string".png}\includegraphics[height=2.13cm]{\string"Plots/2D/Example 3/b_5_cond_histogram__50\string".png}

\raisebox{0.8cm}{\rotatebox[origin=t]{90}{CDSB-FB}}\enskip{}\includegraphics[height=2.13cm]{\string"Plots/2D/Example 1/b_5_cond_histogram_fwdbwd_50\string".png}\includegraphics[height=2.13cm]{\string"Plots/2D/Example 2/b_5_cond_histogram_fwdbwd_50\string".png}\includegraphics[height=2.13cm]{\string"Plots/2D/Example 3/b_5_cond_histogram_fwdbwd_50\string".png}

\raisebox{0.8cm}{\rotatebox[origin=t]{90}{MGAN}}\enskip{}\includegraphics[height=2.13cm]{\string"Plots/2D/Example 1/MGAN_Example1\string".png}\includegraphics[height=2.13cm]{\string"Plots/2D/Example 2/MGAN_Example2\string".png}\includegraphics[height=2.13cm]{\string"Plots/2D/Example 3/MGAN_Example3\string".png}

\captionof{figure}{\label{fig:2dconditional}True posterior $p(x|\yobs)$ for $\yobs\in\{-1.2,0,1.2\}$ (solid lines) and approximations for the 2D examples.}
\end{minipage}

\vspace{0.25cm}

\begin{minipage}{\linewidth}
\tabcolsep=0.06cm
\small
    \begin{centering}
    \begin{tabular}{|c|c|c|c|c|c|c|c|}
    \hline 
    \multirow{1}{*}{} &  & MCMC & CDSB & CDSB-FB & CDSB-C & MGAN & IT\tabularnewline
    \hline 
    \multirow{2}{*}{Mean} & $x_{1}$ & .075 & .066 & .068 & \textbf{.072} & .048 & .034\tabularnewline
    \cline{2-8} \cline{3-8} \cline{4-8} \cline{5-8} \cline{6-8} \cline{7-8} \cline{8-8} 
     & $x_{2}$ & .875 & .897 & .897 & \textbf{.891} & .918 & .902\tabularnewline
    \hline 
    \multirow{2}{*}{Var} & $x_{1}$ & .190 & .184 & \textbf{.190} & .188 & .177 & .206\tabularnewline
    \cline{2-8} \cline{3-8} \cline{4-8} \cline{5-8} \cline{6-8} \cline{7-8} \cline{8-8} 
     & $x_{2}$ & .397 & .387 & .391 & \textbf{.393} & .419 & .457\tabularnewline
    \hline 
    \multirow{2}{*}{Skew} & $x_{1}$ & 1.94 & \textbf{1.90} & 2.01 & \textbf{1.90} & 1.83 & 1.63\tabularnewline
    \cline{2-8} \cline{3-8} \cline{4-8} \cline{5-8} \cline{6-8} \cline{7-8} \cline{8-8} 
     & $x_{2}$ & .681 & .591 & .628 & .596 & \textbf{.630} & .872\tabularnewline
    \hline 
    \multirow{2}{*}{Kurt} & $x_{1}$ & 8.54 & 7.85 & \textbf{8.54} & 8.00 & 7.64 & 7.57\tabularnewline
    \cline{2-8} \cline{3-8} \cline{4-8} \cline{5-8} \cline{6-8} \cline{7-8} \cline{8-8} 
     & $x_{2}$ & 3.44 & 3.33 & \textbf{3.51} & 3.27 & 3.19 & 3.88\tabularnewline
    \hline 
    \end{tabular}
    \par\end{centering}
\captionof{table}{\label{tab:bodresult} Estimated posterior moments for the BOD example. The closest estimates to MCMC are highlighted in bold. }
\end{minipage}
\end{figure}

\subsection{2D Synthetic Examples}

\begin{figure*}[h!]
\centering
\small
\begin{minipage}{\textwidth}
\tabcolsep=0.04cm
    \subfloat[]{
    \begin{tabular}{ccc}
    \toprule 
     & $N=5$ & $N=10$\tabularnewline
    \midrule
    \midrule 
    CSGM & 17.22/0.672 & 20.03/0.795\tabularnewline
    \midrule 
    CDSB & 18.55/0.746 & 20.69/0.792\tabularnewline
    \midrule 
    CSGM-C & 18.61/0.749 & 20.83/0.838\tabularnewline
    \midrule 
    CDSB-C & \textbf{19.67}/\textbf{0.753} & \textbf{20.95}/\textbf{0.840}\tabularnewline
    \bottomrule
    \end{tabular}
    }\hspace{-3pt}\subfloat[]{
    \begin{tabular}{cc}
    \toprule 
    $N=10$ & $N=20$\tabularnewline
    \midrule
    \midrule 
    14.77/0.599 & 16.31/0.706\tabularnewline
    \midrule 
    16.24/0.618 & 16.61/0.657\tabularnewline
    \midrule 
    16.38/\textbf{0.701} & 16.53/0.730\tabularnewline
    \midrule 
    \textbf{16.60}/0.700 & \textbf{16.65}/\textbf{0.747}\tabularnewline
    \bottomrule
    \end{tabular}
    }\hspace{-3pt}\subfloat[]{
    \begin{tabular}{cc}
    \toprule 
    $N=20$ & $N=50$\tabularnewline
    \midrule
    \midrule 
    19.52/0.471/92.02 & 20.52/0.567/48.68\tabularnewline
    \midrule 
    19.72/0.504/57.22 & 20.70/0.590/40.08\tabularnewline
    \midrule 
    20.44/0.566/44.44 & 20.84/0.592/22.89\tabularnewline
    \midrule 
    \textbf{21.11}/\textbf{0.614}/\textbf{28.41} & \textbf{21.46}/\textbf{0.646}/\textbf{13.71}\tabularnewline
    \bottomrule
    \end{tabular}
    }\hspace{-3pt}\subfloat[]{
    \begin{tabular}{cc}
    \toprule 
    $N=20$ & $N=50$\tabularnewline
    \midrule
    \midrule 
    24.22/0.844/17.62 & 25.29/0.878/7.18\tabularnewline
    \midrule 
    24.88/0.850/19.85 & 26.61/0.894/3.87\tabularnewline
    \midrule 
    \textbf{28.26}/0.914/3.63 & \textbf{28.14}/0.913/1.31\tabularnewline
    \midrule 
    28.19/\textbf{0.915}/\textbf{2.28} & 28.06/\textbf{0.914}/\textbf{1.14}\tabularnewline
    \bottomrule
    \end{tabular}
    }
    \captionof{table}{\label{tab:imagemetrics}Results for (a) MNIST 4x SR;
    (b) MNIST 14x14 inpainting; (c) CelebA 4x SR with Gaussian
    noise; (d) CelebA 32x32 inpainting. Reported results are denoted in the format PSNR↑/SSIM↑(/FID↓). }
\end{minipage}

\setcounter{subfigure}{0}

\begin{minipage}{0.46\textwidth}
\pdfpxdimen=\dimexpr 1in/300\relax
\begin{centering}
\subfloat[$\yobs$]{\includegraphics[trim={40px 40px 40px 40px}, clip, width=.5\linewidth]{Plots/MNIST_superres_main/Cond.png}} 
\subfloat[Ground truth]{\includegraphics[trim={40px 40px 40px 40px}, clip, width=.5\linewidth]{\string"Plots/MNIST_superres_main/True_data\string".png}} \\
\vspace{-0.2cm}
\subfloat[CSGM]{\includegraphics[trim={40px 40px 40px 40px}, clip, width=.5\linewidth]{\string"Plots/MNIST_superres_main/N=5_CDiff\string".png}}
\subfloat[CDSB-C]{\includegraphics[trim={40px 40px 40px 40px}, clip, width=.5\linewidth]{\string"Plots/MNIST_superres_main/N=5_CDSB-Cond\string".png}}
\par\end{centering}
\caption{\label{fig:imagecomparison-mnist}Uncurated samples for the MNIST 4x SR task with $N=5$.}
\end{minipage}  \qquad\quad
\begin{minipage}{0.46\textwidth}
\begin{centering}
\subfloat[$\yobs$]{\includegraphics[height=.5\linewidth]{Plots/CelebA_superres_main/Cond.png}}~~
\subfloat[Ground truth]{\includegraphics[height=.5\linewidth]{\string"Plots/CelebA_superres_main/True_data\string".png}} \\
\vspace{-0.2cm}
\subfloat[CSGM]{\includegraphics[height=.5\linewidth]{\string"Plots/CelebA_superres_main/N=20_CDiff\string".png}}~~
\subfloat[CDSB-C]{\includegraphics[height=.5\linewidth]{\string"Plots/CelebA_superres_main/N=20_CDSB-Cond\string".png}}
\par\end{centering}
\caption{\label{fig:imagecomparison-celeba}Uncurated samples for the CelebA 4x SR with Gaussian noise task with $N=20$. }
\pdfpxdimen=\dimexpr 1in/72\relax
\end{minipage} 
\end{figure*}

We first demonstrate the validity and accuracy of our method
using the two-dimensional examples of \cite{kovachki2021conditional}.
We consider three nonlinear, non-Gaussian examples for $\pjoin(x,y)$: 
define $\pobs(y)=\textup{Unif}(y;[-3,3])$ for all examples and $p(x|y)$ is defined through
\begin{align}
\text{Example 1: } & X=\tanh(Y)+Z, &  & Z\sim\Gamma(1,0.3),\\
\text{Example 2: } & X=\tanh(Y+Z), &  & Z\sim\mathcal{N}(0,0.05),\\
\text{Example 3: } & X=Z\tanh(Y), &  & Z\sim\Gamma(1,0.3).
\end{align}
We run CDSB on each of the examples with 50,000 training points and compare with
the Monotone GAN (MGAN) algorithm \citep{kovachki2021conditional}.  CDSB uses a
neural network model with 32k parameters (approximately 6x less parameters than
MGAN) with $N=50$ diffusion steps. Figure \ref{fig:2dconditional} shows the
resulting histogram of the learned $p(x|\yobs)$ and the true posterior for
$\yobs\in\{-1.2,0,1.2\}$.  As can be observed, the empirical density of CDSB
samples is sharper and aligns more closely with the ground truth density.  We
also observe that using more CDSB iterations corrects the sampling bias compared
to using only one CDSB iteration (which corresponds to CSGM). Using
forward-backward sampling (CDSB-FB) further improves the sample quality.

\subsection{Biochemical Oxygen Demand Model}
We now consider a Bayesian inference problem on biochemical oxygen demand
(BOD) from \citet{marzouk2016sampling}. Let
$X_{1},X_{2}\overset{\textup{i.i.d.}}\sim\mathcal{N}(0,1)$,
$A=0.8+0.4 \erf(X_{1}/\sqrt{2})$, $B=0.16+0.15 \erf(X_{2}/\sqrt{2})$ and $Y=\{Y(t)\}_{t=1}^5$ satisfy
$Y(t)=A(1-\exp(-Bt))+Z$ with
$Z\sim\mathcal{N}(0,{10}^{-3})$.  Table \ref{tab:bodresult} displays moment statistics of the estimated posterior $p(x|y)$ (standard deviations are reported in the supplementary), in comparison with
the ``ground truth'' statistics computed using $6\times{10}^{6}$ MCMC steps as
reported in \citet{marzouk2016sampling}. To match the evaluation in
\citet{kovachki2021conditional}, the reported statistics are computed using
30,000 samples and averaged across the last 10 CDSB iterations. The resulting
posterior displays high skewness and high kurtosis, but all CDSB-based methods achieve more accurate posterior estimation than MGAN
and the inverse transport (IT) method in \cite{marzouk2016sampling}.

\subsection{Image Experiments}
\subsubsection{Gaussian Reference Measure}
We now apply CDSB to a range of inverse problems on image
datasets. We consider the following tasks: (a) MNIST 4x SR (7x7 to
28x28), (b) MNIST center 14x14 inpainting, (c) CelebA 4x SR (16x16 to
64x64) with Gaussian noise of $\sigma_{y}=0.1$, (d) CelebA center 32x32 inpainting. 
For CSGM-C and CDSB-C, we consider the
following choices for conditional $\pref(x|y)$: for tasks (a) and (c), we use
the upsampled $y$ directly as described in \Cref{subsec:targetawareinitial}; for inpainting tasks
(b) and (d), we use a separate neural network with the same architecture as
$\mathbf{F},\mathbf{B}$ to output the initialization mean. In \Cref{tab:imagemetrics} we report PSNR and SSIM (the higher the better), as well as FID scores (the lower the better) for RGB images only. We display a
visual comparison between the methods in Figures \ref{fig:imagecomparison-mnist} and \ref{fig:imagecomparison-celeba}, and additional image samples in the supplementary. CDSB and CDSB-C
both provide significant improvement in terms of quantitative metrics as well as visual evaluations, and high-quality images can be generated quickly under few iterations $N$. 

\subsubsection{Pre-trained SR Model for Reference Measure}
\label{subsubsec:nongaussianref}
We further explore here the possibility of using a non-Gaussian $\pref(x|y)$ to further bridge the gap towards the true posterior $p(x|y)$. We utilize the super-resolution model SRFlow \citep{lugmayr2020srflow}, which produces
a probability distribution over possible SR images using a conditional normalizing flow. We use their pre-trained model checkpoints for the 8x SR task for CelebA (160x160). We then train a short CDSB model 
with SRFlow as $\pref(x|y)$,
in order to take advantage of the high sampling quality of diffusion models.
As can be seen from Figure \ref{fig:imagecomparison-celeba160}, 
with only $N=10$ steps the CDSB model is able to make 
meaningful improvements to the SRFlow samples, 
especially in the finer details such as facial features and hair texture. 
Quantitatively, CDSB-C produces significant improvement over the
FID score at the cost of a decrease in PSNR; see Table \ref{tab:imagemetrics-celeba160}. 
Note that this choice of non-Gaussian $\pref(x|y)$ is not compatible with CSGM. Interestingly CSGM-C still improves the PSNR compared to SRFlow, but produces worse FID scores than CDSB-C and blurry samples. 

\begin{figure}[t]
\centering
\small
\begin{minipage}{0.46\textwidth}
    \begin{tabular}{ccc}
    \toprule 
    $\pref(x|y)$ & CSGM-C & CDSB-C\tabularnewline
    \midrule
    \midrule 
    Gaussian & 22.21/0.521/87.02 & 23.86/0.628/31.65 \tabularnewline 
    \midrule 
    SRFlow $\tau=0.8$ & \textbf{24.97}/0.701/26.83 & 24.34/0.674/\textbf{15.00} \tabularnewline
    \midrule
    \midrule 
    SRFlow $\tau=0.8$ & \multicolumn{2}{c}{24.83/\textbf{0.702}/30.92}\tabularnewline
    \bottomrule
    \end{tabular}
\captionof{table}{\label{tab:imagemetrics-celeba160}Results for CelebA 8x SR. Reported results are denoted in the format PSNR↑/SSIM↑/FID↓. The final row reports our evaluated results of the SRFlow model. }
\end{minipage}

\begin{minipage}{0.46\textwidth}
\subfloat[$\yobs$]{\includegraphics[width=.5\linewidth]{Plots/CelebA160/im_grid_data_y}}
\subfloat[Ground truth]{\includegraphics[width=.5\linewidth]{Plots/CelebA160/im_grid_data_x}}
\vspace{-0.2cm}
\\
\subfloat[SRFlow]{\includegraphics[width=.5\linewidth]{Plots/CelebA160/im_grid_srflow}}
\subfloat[CDSB-C]{\includegraphics[width=.5\linewidth]{Plots/CelebA160/im_grid_cdsb}}
\caption{\label{fig:imagecomparison-celeba160} Paired samples for CelebA 8x SR. The SRFlow samples (c) are inputted as conditional initialization into CDSB-C (d), which produces fine modifications over $N=10$ steps (Best viewed when zoomed in).}
\end{minipage}

\end{figure}

\subsection{Filtering in State-Space Models}\label{sec:filtering}
Consider a state-space model defined by a bivariate Markov chain
$(X_t,Y_t)_{t\geq 1}$ of initial density $\mu(x_1)g(y_1|x_1)$ and transition
density $f(x_{t+1}|x_{t})g(y_{t+1}|x_{t+1})$ where $X_t$ is latent while $Y_t$ is observed. We are interested in estimating
sequentially in time the filtering distribution $p(x_t|\yobs_{1:t})$,
that is the posterior of $X_t$ given the observations $Y_{1:t}=\yobs_{1:t}$. We
show here how CDSB can be used at each time $t$ to obtain a sample approximation
of these filtering distributions. This CDSB-based algorithm only requires us being
able to sample from the transition density $f(x_{t+1}|x_{t})g(y_{t+1}|x_{t+1})$ and is
thus more generally applicable than standard techniques such as particle filters
\citep{doucet2009tutorial}.

Assume at time $t$, one has a collection of samples $\{X^i_{t}\}_{i=1}^M$
distributed (approximately) according to $p(x_{t}|\yobs_{1:t})$. We 
sample $X^i_{t+1} \sim f(x_{t+1}|X^i_{t})$ and $Y^i_{t+1} \sim g(y_{t+1}|X^i_{t+1})$. The
resulting samples $\{X^i_{t+1},Y^i_{t+1}\}_{i=1}^M$ are thus distributed according to
$\pjoin(x_{t+1},y_{t+1}):=p(x_{t+1},y_{t+1}|\yobs_{1:t})$. We can also easily obtain samples
from $\pjref(x_{t+1},y_{t+1}):=\pref(x_{t+1}|y_{t+1},\yobs_{1:t}) p(y_{t+1}|\yobs_{1:t})$ where
$\pref(x_{t+1}|y_{t+1},\yobs_{1:t})$ is an easy-to-sample distribution designed by the
user. Thus we can use CDSB to obtain a (stochastic) transport map between
$\pjoin(x_{t+1},y_{t+1})$ and $\pjref(x_{t+1},y_{t+1})$ and applying it to $Y_{t+1}=\yobs_{t+1}$, we can
obtain new samples from $p(x_{t+1}|\yobs_{1:t+1})$.  A similar strategy for filtering
based on deterministic transport maps was recently proposed by
\cite{spantini2019coupling}.

We apply CSGM and CDSB to the Lorenz-63 model \citep{law2015data}
following the procedure above for a time series of length 2000. 
We consider a short diffusion process with $N=20$
steps, as well as a long one with $N=100$. To accelerate the sequential
inference process, in this example we use analytic basis regression instead of
neural networks for all methods, and we only run 5 iterations of CDSB. 
As the EnKF is applicable
to this model, we can use the resulting approximate Gaussian filtering
distribution it outputs for $\pref(x_{t+1}|y_{t+1},\yobs_{1:t})$ in CSGM-C and CDSB-C.

Table \ref{tab:filtering} shows that for $N=20$ both CDSB and
CDSB-C successfully perform filtering and outperform the EnKF,
whereas both CSGM and CSGM-C fail to track the state accurately and diverge after a few
hundred times steps. CDSB-C achieves the lowest error consistently.
When using $N=100$, CSGM can achieve RMSE comparable with CDSB-C using $N=20$, but CDSB still provides advantages compared to CSGM. CSGM-C achieves comparable RMSE as CDSB-C with suitably long diffusion process in this case. 
For lower ensemble size, e.g. $M=200$, occasional large errors occur for some of the runs; see supplementary for details. We conjecture that this is due to overfitting. 

\begin{table}[h]
\begin{centering}
\small
    \begin{tabular}{|c|c|c|c|c|}
    \hline 
    $M$ & 500 & 1000 & 2000\tabularnewline
    \hline 
    \hline 
    EnKF & .354\textpm 0.006 & .355\textpm .005 & .354\textpm .003\tabularnewline
    \hline 
    \hline 
    CSGM(-C) (short)& \multicolumn{3}{c|}{Diverges}\tabularnewline
    \hline 
    CDSB (short)& .251\textpm .011 & .218\textpm .008 & .196\textpm .005\tabularnewline
    \hline 
    CDSB-C (short)& \textbf{.236\textpm .012} & \textbf{.207\textpm .014} & \textbf{.178\textpm .007}\tabularnewline
    \hline 
    \hline 
    CSGM (long) & .232\textpm .008 & .203\textpm .009 & .182\textpm .009\tabularnewline
    \hline 
    CDSB (long) & .220\textpm .012 & .195\textpm .007 & .166\textpm .004\tabularnewline
    \hline 
    CSGM-C (long) & \textbf{.210\textpm .009} & \textbf{.185\textpm.005} & .162\textpm.004\tabularnewline
    \hline 
    CDSB-C (long) & .218\textpm .014 & \textbf{.185\textpm .008} & \textbf{.160\textpm .003}\tabularnewline
    \hline 
    \end{tabular}
\par\end{centering}
\caption{\label{tab:filtering}RMSEs over 10 runs between each algorithm's filtering means
and the ground truth filtering means for $N=20$ (short) and $N=100$ (long). }
\end{table}

\section{Discussion}
We have proposed a SB formulation of conditional simulation and an algorithm,
CDSB, to approximate its solution. The first iteration of CDSB coincides with
CSGM while subsequent ones can be thought of as refining it. 
This theoretically grounded approach is complementary to the many other techniques
that have been recently proposed to accelerate SGMs  and could
be used in conjunction with them.  However, it also suffers from limitations. As
CDSB approximates numerically the diffusion processes output by IPF, the minimum
$N$ one can pick to obtain reliable approximations is related to the steepness
of the drift of these iterates which is practically unknown. Additionally CSGM
and CDSB are only using $\yobs$ when we want to sample from $p(x|\yobs)$ but not
at the training stage. Hence if $\yobs$ is not an observation ``typical'' under
$\pobs(y)$, the approximation of the posterior can be unreliable. In the ABC
context, the best available methods rely on procedures which sample synthetic
observations in the neighbourhood of $\yobs$. It would be interesting but
challenging to extend such ideas to CSGM and CDSB. Other interesting potential
extensions include developing an amortized version of CDSB for filtering that
would avoid having to solve a SB problem at each time step, and a conditional
version of the multimarginal SB problem.
\label{sec:conclusion}

\begin{acknowledgements}
We thank James Thornton for his helpful comments. We are also grateful to the authors of \citep{kovachki2021conditional} for sharing their code with us.
\end{acknowledgements}

\bibliography{refs}

\newpage
\appendix

\onecolumn

\section{Organization of the supplementary}

The supplementary is organized as follows. We recall the DSB algorithm for unconditional simulation from
\cite{debortoli2021neurips} in \Cref{sec:DSBalgorithm}. The proofs of our
propositions are given in \Cref{sec:proof-propositions}. In
\Cref{sec:lossfunctions}, we give details on the loss functions we use to train
CDSB. A continuous-time version of the conditional time-reversal and conditional
DSB is presented in \Cref{sec:cont-time-vers}. The forward-backward technique
used in our experiments is detailed in \Cref{sec:forw-backw-sampl}. Finally, we
provide experimental details and guidelines in \Cref{sec:experimental-details}.

\section{Diffusion \schro bridge}\label{sec:DSBalgorithm}
We recall here the DSB algorithm introduced by \cite{debortoli2021neurips} which is a numerical approximation of IPF\footnote{For discrete measures, IPF is also known as the Sinkhorn algorithm and can be implemented exactly \citep{peyre2019computational}.}. 
\begin{algorithm}[H]
    \caption{Diffusion \schro Bridge \citep{debortoli2021neurips}}
    \label{algo:ipf_score}
    \begin{algorithmic}[1] 
      \FOR{$n \in \{0, \dots,L\}$} \WHILE{not converged}
      \STATE Sample $\{X^j_{k}\}_{k,j=0}^{N,M}$, where  $X^j_0 \sim \pdata$, and \\
      $X^{j}_{k+1} = \mathbf{F}_{\phi^n}(k, X^{j}_{k})+\sqrt{2
        \gamma_{k+1}} Z^{j}_{k+1}$ 
        \STATE Compute $\hlb_n(\theta^n)$ approximating \eqref{eq:regressionbuncond}
        \STATE $\theta^{n} \leftarrow \textrm{Gradient Step}(\hlb_n(\theta^n))$ 
      \ENDWHILE \WHILE{not
        converged}
      \STATE Sample $\{X^j_{k}\}_{k,j=0}^{N,M}$, where $X^j_N \sim \pref$, and \\
      $X^j_{k-1}=\mathbf{B}_{\theta^n}(k, X^{j}_k)+\sqrt{2 \gamma_{k}}
      \tilde{Z}^{j}_{k}$ 
      \STATE Compute $\hlf_{n+1}(\phi^{n+1})$ approximating \eqref{eq:regressionfuncond}
      \STATE
      $\phi^{n+1} \leftarrow \textrm{Gradient Step}(\hlf_{n+1}(\phi^{n+1}))$
      \ENDWHILE \ENDFOR \STATE \textbf{Output: } $( \theta^{L},\phi^{L+1})$
    \end{algorithmic}
\end{algorithm}
\hfill

In this (unconditional) SB scenario, the transition kernels satisfy
$q^n_{k|k+1}(x|x')=
\mathcal{N}(x;\mathbf{B}_{\theta^n}(k+1,x'),2\gamma_{k+1} \Id)$ and
$p_{k+1|k}^n(x'|x) = \mathcal{N}(x';\mathbf{F}_{\phi^n}(k,x),
2\gamma_{k+1} \Id)$ where $\theta^n$ is obtained by minimizing
\begin{equation} \textstyle{\ell^{b}_n(\theta)=\mathbb{E}_{p^{n}}[\sum_{k}\normLigne{\mathbf{B}_\theta(k+1,X_{k+1})-G_{n,k}(X_{k},X_{k+1})}^2]}\label{eq:regressionbuncond}
\end{equation}
for $G_{n,k}(x,x')=x'+\mathbf{F}_{\phi^n}(k,x)-\mathbf{F}_{\phi^n}(k,x')$ and $\phi^{n+1}$ by minimizing 
\begin{equation} \textstyle{\ell^{f}_{n+1}(\phi)=\mathbb{E}_{q^{n}}[\sum_{k}\normLigne{\mathbf{F}_\phi(k,X_k)-H_{n,k}(X_k,X_{k+1})}^2]}\label{eq:regressionfuncond}
\end{equation}
for $H_{n,k}(x,x')=x + \mathbf{B}_{\theta^n}(k+1,x')-\mathbf{B}_{\theta^n}(k+1,x)$. See \cite{debortoli2021neurips} for a derivation of these loss functions.

\section{Proofs of Propositions}
\label{sec:proof-propositions}
\subsection{Proof of Proposition \ref{prop:SBreformulation}}

Let $\bar{\pi}$ such that $\textup{KL}(\bar{\pi}|\bar{p}) < +\infty$, which
exists since we have that $\textup{KL}(\bar{\pi}^\star|\bar{p}) < +\infty$, and
$\bar{\pi}_0= \pjoin,~ \bar{\pi}_N =\pjref$,  where we define the joint forward process
$\bar{p}(x_{0:N},y_{0:N}):=p_{y_0}(x_{0:N})\bpobs(y_{0:N})$. Recall that 
$p_{y_0}(x_{0:n}):=p(x_0|y_0) \prod_{k=0}^{N-1}p_{k+1|k}(x_{k+1}|x_k)$ is 
the forward process starting from the posterior $p(x_0|y_0)$, and
$\bpobs(y_{0:N}):=\pobs(y_0) \prod_{k=0}^{N-1}\delta_{y_k}(y_{k+1})$ is the extended $y$-process. 
Since $\KLLigne{\bar{\pi}}{\bar{p}}<+\infty$ we have using the transfer theorem
\cite[Theorem 2.4.1]{kullback1997information} that
$\KLLigne{\bar{\pi}_{\textup{obs}}}{\bpobs}<+\infty$, where
$\bar{\pi}_{\textup{obs}}(y_{0:N}): = \int_{(\rset^d)^N} \bar{\pi}(x_{0:N}, y_{0:N}) \rmd
x_{0:N}$. In addition, using the chain rule for the Kullback--Leibler
divergence, see \cite[Theorem 2.4]{leonard2014some}, we get that
\begin{equation}
  \textstyle{
  \KLLigne{\bar{\pi}_{\textup{obs}}}{\bpobs} =  \KLLigne{\bar{\pi}_{\textup{obs},0}}{\pobs} + \int_{\mcy} \KLLigne{\bar{\pi}_{\textup{obs}|0}}{\bar{p}_{\textup{obs}|0}} \pobs(y) \rmd y < +\infty},
\end{equation}
where $\bar{p}_{\textup{obs}|0} = \prod_{k=0}^{N-1}\delta_{y_k}(y_{k+1})$ and
therefore
$\bar{\pi}_{\textup{obs}|0} =\bar{p}_{\textup{obs}|0}$. Since we also have that
$\bar{\pi}_{\textup{obs},0} = \pobs$ we get that
$\bar{\pi}_{\textup{obs}} = \bpobs$. Hence, letting $\pi^{c}$ be the
kernel such that $\bar{\pi} = \pi^c \otimes \bpobs$ we have using
\cite[Theorem 2.4]{leonard2014some} that
\begin{equation}
  \label{eq:KL_eq}
  \textstyle{\KLLigne{\bar{\pi}}{\bar{p}} = \int_{\mcy} \KL{\pi^c_y}{p_y} \pobs(y) \rmd y  . }
\end{equation}
In addition, we have $\bar{\pi}_0 = \pi_0^c \otimes \pobs =
\pjoin$. Similarly, we have
$\bar{\pi}_N = \pi_N^c \otimes \pobs = \pjref$. Hence,
$\pi_{y,0}^c = p(\cdot|y)$ and $\pi_{y,N}^c = \pref$, $\pobs$-almost surely.  Let
$\bar{\pi}^\star = \pi^{\star,c} \otimes \bpobs$ be the minimizer of
\eqref{eq:conditionalSBextended} and $\hat{\pi}^{c}$ be the minimizer of
\eqref{eq:SBuncondreformulated}. Then, we have that
$\bar{\pi} = \hat{\pi}^{c} \otimes \bpobs$ satisfies
$\KLLigne{\bar{\pi}^\star}{\bar{p}} \leq \KLLigne{\bar{\pi}}{\bar{p}}$. Using
\eqref{eq:KL_eq}, we have that
$\expeLigne{\KL{\pi^{\star,c}_Y}{p_Y}} \leq
\expeLigne{\KL{\hat{\pi}^c_Y}{p_Y}}$. But we have that
$\expeLigne{\KL{\hat{\pi}^{c}_Y}{p_Y}} \leq
\expeLigne{\KL{\pi^{\star,c}_Y}{p_Y}}$ since $\hat{\pi}^{c}$ is the minimizer of
\eqref{eq:SBuncondreformulated}. Using the uniqueness of the minimizer of
\eqref{eq:SBuncondreformulated} we have that $\pi^{\star,c} = \hat{\pi}^{c}$,
which concludes the proof.

\subsection{Proof of Proposition \ref{prop:IPFrecursion}}

Let $n \in \nset$ and $\bar{q}$ be such that
$\KLLigne{\bar{q}}{\bar{p}^n}<+\infty$ and $\bar{q}_N = \pjref$ (note that the
existence of such a distribution is ensured since
$\KLLigne{\pjoin \otimes \pjref}{\bar{p}^n_{0,N}} < +\infty$).  Using the chain
rule for the Kullback--Leibler divergence, see \cite[Theorem
2]{leonard2014some}, we have
\begin{equation}
  \label{eq:KL_eq_y_traj}
  \textstyle{\KLLigne{\bar{q}}{\bar{p}^n} = \KL{\bar{q}_{\textup{obs}}}{\bpobs} + \int_{\mcy^{N+1}} \KLLigne{\bar{q}_{|\textup{obs}}}{\bar{p}_{|\textup{obs}}^n} \rmd \bar{q}_{\textup{obs}}(y_{0:N})  , }
\end{equation}
where
$\bar{q}_{\textup{obs}} = \int_{\mcx^{N+1}} \bar{q}(x_{0:N}, y_{0:N}) \rmd
x_{0:N} $ and $\bar{q}_{|\textup{obs}}$ and $\bar{p}_{|\textup{obs}}^n$ are the
conditional distribution of $\bar{q}$, respectively $\bar{p}^n$ w.r.t. to
$y_{0:N}$. Since
$ \KL{\bar{q}_{\textup{obs}}}{\bar{p}_{\textup{obs}}}< +\infty$, we can use
\cite[Theorem 2.4]{leonard2014some} and we have
\begin{equation}
  \textstyle{\KL{\bar{q}_{\textup{obs}}}{\bar{p}_{\textup{obs}}} = \KL{\bar{q}_{\textup{obs},N}}{\bar{p}_{\textup{obs},N}} + \int_{\mcy} \KL{\bar{q}_{\textup{obs}|N}}{\bar{p}_{\textup{obs}|N}} \rmd \bar{q}_{\textup{obs},N}(y_N),}
\end{equation}
with
$\bar{p}_{\textup{obs}|N}(y_{0:N-1}|y_N)
=\prod_{k=0}^{N-1}\delta_{y_{k+1}}(y_{k})$. Therefore, since
$\KL{\bar{q}_{\textup{obs}}}{\bar{p}_{\textup{obs}}}<+\infty$, we get that
$\bar{q}_{\textup{obs}|N}(y_{0:N-1}|y_N) =
\prod_{k=0}^{N-1}\delta_{y_{k+1}}(y_{k})$. Since
$\bar{q}_{\textup{obs}, N} = \pobs$, we get that
$\bar{q}(x_{0:N}, y_{0:N}) = \bpobs(y_{0:N})\bar{q}(x_{0:N}|y_{0:N}) =
\bpobs(y_{0:N})\bar{q}(x_{0:N}|y_{N})$, where we have used that $y_N = y_k$
for $k \in \{0, \dots, N\}$, $\bpobs(y_{0:N})$ almost surely. Combining this
result and \eqref{eq:KL_eq_y_traj} we get that
\begin{align}
  \KLLigne{\bar{q}}{\bar{p}^n} &=  \textstyle{\int_{\mcy^{N+1}} \KLLigne{\bar{q}_{|\textup{obs}}}{\bar{p}_{|\textup{obs}}^n} \rmd \pobs(y_{0:N})  }
  =  \textstyle{\int_{\mcy} \KLLigne{\bar{q}(\cdot|y_N)}{\bar{p}^n(\cdot|y_N)} \rmd \pobs(y_{N})  , }
  \end{align}
Using \cite[Theorem 2]{leonard2014some}, we have that for any $y_N \in \mcy$
\begin{equation}
  \textstyle{
  \KLLigne{\bar{q}(\cdot|y_N)}{\bar{p}^n(\cdot|y_N} = \KLLigne{\pref}{\bar{p}_N^n(\cdot|y_N)} + \int_{\mcy} \KLLigne{\bar{q}(\cdot|y_N, x_N)}{\bar{p}^n(\cdot|y_N, x_N)} \pref(x_N) \rmd x_N .}
\end{equation}
For the IPF solution $\bar{q}^n$, we get that $\bar{q}^n(\cdot|y_N, x_N) = \bar{p}^n(\cdot|y_N, x_N)$. Therefore for any $x_{0:N} \in \mcx^{N+1}$ and $y_N \in \mcy$,
\begin{equation}
 \textstyle{\bar{q}^n(x_{0:N}|y_{N})= \pref(x_N)\prod_{k=0}^{N-1}
  \bar{p}^n_{k|k+1}(x_k|x_{k+1},y_{N})} .
\end{equation}
The proof is similar for any $x_{0:N} \in \mcx^{N+1}$ and $y_0 \in \mcy$, we have
\begin{equation}
  \textstyle{\bar{p}^{n+1}(x_{0:N}|y_0)= p(x_0|y_0) \prod_{k=0}^{N-1} \bar{q}^{n}_{k+1|k}(x_{k+1}|x_{k},y_{0}).}
\end{equation}

\subsection{Proof of Proposition \ref{prop:fasterconverence}}

Using \cite[Corollary 1]{leger2020gradient}, we get that for any $n \in \nset$
with $n \geq 1$
\begin{equation}
  \label{eq:leger_res}
  \KLLigne{\bar{\pi}^n_0}{\pjoin} + \KLLigne{\bar{\pi}^n_N}{\pjref} \leq \frac{2}{n}\KLLigne{\bar{\pi}^\star}{\bar{p}}.
\end{equation}
Similarly to \Cref{prop:IPFrecursion}, we have that for any $n \in \nset$, there
exists a Markov kernel $\pi^{c,n}$ such that
$\bar{\pi}^n = \bpobs \otimes \pi^{c,n}$. Recall that there exists a Markov
kernel $\pi^{c,\star}$ such that $\bar{\pi}^\star = \bpobs \otimes \pi^{c,\star}$
and that $\bar{p} = \bpobs \otimes p_y$. Hence, using \cite[Theorem
2.4]{leonard2014some}, we get that for any $n \in \nset$,
\begin{equation}
  \label{eq:leonard_1}
  \KLLigne{\bar{\pi}^n_0}{\pjoin}  = \expeLigne{\KLLigne{\pi^{c,n}_{Y,0}}{p(\cdot|Y)}}, \qquad \KLLigne{\bar{\pi}^n_N}{\pjref}  = \expeLigne{\KLLigne{\pi^{c,n}_{Y,N}}{\pref}}.
\end{equation}
Similarly, we have that
\begin{equation}
  \label{eq:leonard_2}
  \KLLigne{\bar{\pi}^\star}{\bar{p}} = \expeLigne{\KLLigne{\pi^{c,\star}_Y}{p_Y}} .
\end{equation}
We conclude the proof upon combining \eqref{eq:leger_res}, \eqref{eq:leonard_1}
and \eqref{eq:leonard_2}.

\section{Details on the loss functions}\label{sec:lossfunctions}

In this section, we simplify notation and write $Y$ for all the random variables
$Y_0,Y_1,...,Y_N$ as they are all equal almost surely under $\bar{p}^{n}$ and
$\bar{q}^{n}$, similarly to \Cref{sec:cond-simul-sb}. 
In \Cref{sec:cond-simul-sb}, the transitions satisfy
$\bar{q}^n_{k|k+1}(x|x',y)=
\mathcal{N}(x;\mathbf{B}^{y}_{\theta^n}(k+1,x'),2\gamma_{k+1} \Id)$ and
$\bar{p}_{k+1|k}^n(x'|x,y) = \mathcal{N}(x';\mathbf{F}^{y}_{\phi^n}(k,x),
2\gamma_{k+1} \Id)$ where $\theta^n$ is obtained by minimizing
\begin{equation} \textstyle{\ell^{b}_n(\theta)=\mathbb{E}_{\bar{p}^{n}}[\sum_{k}\normLigne{\mathbf{B}_\theta^Y(k+1,X_{k+1})-G_{n,k}^Y(X_{k},X_{k+1})}^2]}\label{eq:regressionbcond_app}
\end{equation}
for $G_{n,k}^y(x,x')=x'+\mathbf{F}^{y}_{\phi^n}(k,x)-\mathbf{F}^{y}_{\phi^n}(k,x')$ 
and $\phi^{n+1}$ by minimizing 
\begin{equation} \textstyle{\ell^{f}_{n+1}(\phi)=\mathbb{E}_{\bar{q}^{n}}[\sum_{k}\normLigne{\mathbf{F}_\phi^Y(k,X_k)-H_{n,k}^Y(X_k,X_{k+1})}^2]}\label{eq:regressionfcond_app}
\end{equation}
for $H_{n,k}^y(x,x')=x +
\mathbf{B}^{y}_{\theta^n}(k+1,x')-\mathbf{B}^{y}_{\theta^n}(k+1,x)$. We
justify these formulas by proving the following result which is a
straightforward extension of \cite{debortoli2021neurips}. We recall that for any
$n \in \nset$, $k \in \{0, \dots, N\}$, $x_k,x_{k+1} \in \rset^d$ and
$y \in \mcy$,
$b^{n,y}_{k+1}(x_{k+1}) = -f^{n,y}_{k}(x_{k+1})+2 \nabla \log
\bar{p}^{n}_{k+1}(x_{k+1}|y)$ and
$f^{n+1,y}_{k}(x_{k}) = -b^{n,y}_{k+1}(x_{k})+2 \nabla \log
\bar{q}^{n}_{k}(x_k|y)$.\footnote{We should have conditioned w.r.t. $y_N$ and
  $y_0$ but since $y_0 = y_1 = \dots = y_N$ under $\pobs$ we simply conditioned
  by $y$ which can be any of these values.}

\begin{proposition}\label{prop:generalizedscorematching} 
  Assume that for any $n \in \nset$ and $k \in \{0, \dots, N-1\}$,
  $\bar{q}_k(\cdot|y)$ and $\bar{p}_k(\cdot|y)$ are bounded and
  \begin{equation}
    \bar{q}_{k|k+1}^n(x_k|x_{k+1},y) = \mathcal{N}(x_k;B_{k+1}^{n,y}(x_{k+1}), 2\gamma_{k+1}
  \Id)  ,\ \bar{p}_{k+1|k}^n(x_{k+1}|x_{k},y) = \mathcal{N}(x_{k+1};F_{k}^{n,y}(x_{k}), 2\gamma_{k+1}
  \Id) ,
  \end{equation}
 with $B^{n,y}_{k+1}(x) = x +\gamma_{k+1}b^{n,y}_{k+1}(x)$,
  $F^{n,y}_{k}(x)= x +\gamma_{k+1}f_k^{n,y}(x)$ for any $x \in \rset^d$. Then we
  have for any $n \in \nset$ and $k\in \{0, \dots, N-1\}$
\begin{align}
&\textstyle{B^{n}_{k+1}=\argmin_{\mathrm{B}\in \rmL^2(\rset^d \times \mcy, \rset^d)} \expeMarkovLigne{ \bar{p}^{n}}{\normLigne{\mathrm{B}(X_{k+1},Y)-G_{n,k}^Y(X_k,X_{k+1})}^2}},\label{eq:regressionb}\\
  &\textstyle{F^{n+1}_{k}=\argmin_{\mathrm{F}\in \rmL^2(\rset^d \times \mcy, \rset^d)} \expeMarkovLigne{\bar{q}^{n}}{\normLigne{\mathrm{F}(X_k,Y)-H_{n,k}^Y(X_k,X_{k+1})}^2 }},\label{eq:regressionf}\\
  &G_{n,k}^y(x,x') = x' + F^{n,y}_k(x)-F^{n,y}_{k}(x') , \qquad H_{n,k}^y(x,x') = x + B^{n,y}_{k+1}(x')-B^{n,y}_{k+1}(x) .
 \end{align} 
\end{proposition}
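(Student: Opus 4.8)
The plan is to recognize each of the two minimizations as an $\rmL^2$-projection and to identify the projection with a conditional expectation. Recall the elementary fact that, for a square-integrable target, the unique minimizer over $\mathrm{B} \in \rmL^2(\rset^d \times \mcy, \rset^d)$ of $\mathbb{E}_{\bar{p}^n}[\normLigne{\mathrm{B}(X_{k+1}, Y) - G^Y_{n,k}(X_k, X_{k+1})}^2]$ is the conditional expectation $\mathrm{B}^\star(x', y) = \mathbb{E}_{\bar{p}^n}[G^Y_{n,k}(X_k, X_{k+1}) \mid X_{k+1} = x', Y = y]$. Since $Y_0 = \dots = Y_N$ almost surely under $\bar{p}^n$, I would treat $y$ as a fixed parameter and argue fiberwise, so the first claimed identity reduces to showing that this conditional expectation coincides with $B^{n,y}_{k+1}(x')$ for $\bar{p}^n_{k+1}(\cdot \mid y)$-almost every $x'$.

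Expanding $G^y_{n,k}(x, x') = x' + F^{n,y}_k(x) - F^{n,y}_k(x')$ and using linearity, the only nontrivial term to evaluate is $\mathbb{E}_{\bar{p}^n}[F^{n,y}_k(X_k) \mid X_{k+1} = x', Y = y]$. The heart of the argument is a Tweedie-type identity adapting \cite{debortoli2021neurips} to the conditional setting. From the Gaussian transition $\bar{p}^n_{k+1|k}(x' \mid x, y) = \mathcal{N}(x'; F^{n,y}_k(x), 2\gamma_{k+1}\Id)$ one has $\nabla_{x'} \log \bar{p}^n_{k+1|k}(x' \mid x, y) = (F^{n,y}_k(x) - x')/(2\gamma_{k+1})$. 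Differentiating the marginal $\bar{p}^n_{k+1}(x' \mid y) = \int \bar{p}^n_{k+1|k}(x' \mid x, y)\, \bar{p}^n_k(x \mid y)\, \rmd x$ under the integral sign and dividing by $\bar{p}^n_{k+1}(x' \mid y)$ turns the integrand into the posterior $\bar{p}^n_{k|k+1}(x \mid x', y)$, yielding
\[
\nabla \log \bar{p}^n_{k+1}(x' \mid y) = \tfrac{1}{2\gamma_{k+1}}\bigl(\mathbb{E}_{\bar{p}^n}[F^{n,y}_k(X_k) \mid X_{k+1} = x', Y = y] - x'\bigr),
\]
and hence $\mathbb{E}_{\bar{p}^n}[F^{n,y}_k(X_k) \mid X_{k+1} = x', Y = y] = x' + 2\gamma_{k+1} \nabla \log \bar{p}^n_{k+1}(x' \mid y)$.

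Substituting this back, the conditional expectation equals $x' + 2\gamma_{k+1}\nabla\log\bar{p}^n_{k+1}(x' \mid y) - F^{n,y}_k(x')$; inserting $F^{n,y}_k(x') = x' + \gamma_{k+1} f^{n,y}_k(x')$ together with the definition $b^{n,y}_{k+1} = -f^{n,y}_k + 2\nabla\log\bar{p}^n_{k+1}(\cdot \mid y)$ collapses this precisely to $x' + \gamma_{k+1} b^{n,y}_{k+1}(x') = B^{n,y}_{k+1}(x')$, which proves the first identity. The second identity for $F^{n+1}_k$ follows by the symmetric computation, now using the backward transition $\bar{q}^n_{k|k+1}(x \mid x', y) = \mathcal{N}(x; B^{n,y}_{k+1}(x'), 2\gamma_{k+1}\Id)$, the target $H^y_{n,k}(x, x') = x + B^{n,y}_{k+1}(x') - B^{n,y}_{k+1}(x)$, and the definition $f^{n+1,y}_k = -b^{n,y}_{k+1} + 2\nabla\log\bar{q}^n_k(\cdot \mid y)$. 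I expect the main obstacle to be the analytic justification of the score identity, namely interchanging differentiation and integration and ensuring all terms genuinely lie in $\rmL^2$; this is exactly where the boundedness hypotheses on $\bar{p}_k(\cdot\mid y)$ and $\bar{q}_k(\cdot\mid y)$ are needed, since together with the Gaussian tails they supply a dominating function uniform in the fiber variable $y$.
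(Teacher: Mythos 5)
Your proposal is correct and follows essentially the same route as the paper's proof: both derive the score identity $\nabla\log\bar{p}^n_{k+1}(x'|y)=\bigl(\mathbb{E}[F^{n,y}_k(X_k)\mid X_{k+1}=x',Y=y]-x'\bigr)/(2\gamma_{k+1})$ by differentiating the Gaussian-convolution representation of the marginal under the integral sign (justified by the boundedness hypothesis via dominated convergence), then substitute the definitions of $f^{n,y}_k$ and $b^{n,y}_{k+1}$ to identify $B^{n,y}_{k+1}(x')$ as the conditional expectation of $G^Y_{n,k}(X_k,X_{k+1})$, and conclude via the $\rmL^2$-projection characterization of conditional expectation, treating the second identity symmetrically. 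The only cosmetic difference is ordering: you state the projection fact up front and reduce to the score identity, whereas the paper derives the score identity first and invokes the projection at the end.
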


\begin{proof}
  We only prove \eqref{eq:regressionb} since the proof \eqref{eq:regressionf} is
similar. Let $n \in \nset$ and $k \in \{0, \dots, N-1\}$. For any
$x_{k+1} \in \rset^d$ we have
\begin{equation}
  \textstyle{
    \bar{p}^n_{k+1}(x_{k+1}|y) = (4 \uppi \gamma_{k+1})^{-d/2} \int_{\rset^d} \bar{p}^n(x_k|y) \exp[-\normLigne{F_k^{n,y}(x_k) - x_{k+1}}^2/(4\gamma_{k+1})] \rmd x_k  ,
    }
\end{equation}
with $F_k^{n,y}(x_k) = x_k + \gamma_{k+1} f_k^{n,y}(x_k)$. Since $\bar{p}^n_k>0$ is bounded
using the dominated convergence theorem we have for any $x_{k+1} \in \rset^d$
\begin{equation}
  \textstyle{\nabla_{x_{k+1}} \log \bar{p}^n_{k+1} (x_{k+1}|y) = \int_{\rset^d} (F_k^{n,y}(x_k) - x_{k+1})/(2 \gamma_{k+1})~\bar{p}_{k|k+1}(x_k | x_{k+1},y) \rmd x_{k}  . }
\end{equation}
Therefore we get that for any $x_{k+1} \in \rset^d$
\begin{equation}
  \textstyle{b_{k+1}^{n,y}(x_{k+1}) = \int_{\rset^d} (F_k^{n,y}(x_k) - F_{k}^{n,y}(x_{k+1}))/\gamma_{k+1}~  \bar{p}_{k|k+1}(x_k | x_{k+1},y) \rmd x_{k}  . }
\end{equation}
This is equivalent to
\begin{equation}
  \textstyle{B_{k+1}^{n,y}(x_{k+1})  = \CPELigne{X_{k+1} + F_k^{n,Y}(X_k) - F_{k}^{n,Y}(X_{k+1})}{X_{k+1} = x_{k+1}, Y=y}}  ,
\end{equation}
Hence, we get that
\begin{equation}
\textstyle{B^n_{k+1}=\argmin_{\mathrm{B}\in \rmL^2(\rset^d \times \mcy, \rset^d)} \expeMarkovLigne{ \bar{p}^{n}}{\normLigne{\mathrm{B}(X_{k+1},Y)-(X_{k+1} + F^{n,Y}_k(X_{k})-F^{n,Y}_{k}(X_{k+1}))}^2}}  ,
\end{equation}
which concludes the proof.
\end{proof}

\section{Continuous-time versions of CSGM and CDSB}
\label{sec:cont-time-vers}

In the following section, we consider the continuous-time version of CSGM and CDSB. The continuous-time dynamics we recover can be seen as the
extensions of the continuous-time dynamics obtained in the unconditional
setting, see \cite{song2020score,debortoli2021neurips}.

\subsection{Notation}

We start by introducing a few notations.  The space of continuous functions from
$\ccint{0,T}$ to $\rset^d \times \mcy$ is denoted
$\contspace = \rmc(\ccint{0,T}, \rset^d \times \mcy)$ and we denote
$\Pens(\contspace)$ the set of probability measures defined on $\contspace$.  A
probability measure $\Pbb \in \Pens(\contspace)$ is \emph{associated with a
  diffusion} if it is a solution to a martingale problem, i.e.
$\Pbb \in \Pens(\contspace)$ is associated with
$\rmd \bfX_t = b(t, \bfX_t) \rmd t + \sqrt{2} \rmd \bfB_t$  if for any
$\varphi \in \rmc_c^2(\rset^d, \rset)$, $(\bfZ_t^\varphi)_{t \in \ccint{0,T}}$ is a
$\Pbb$-local martingale, where for any $t \in \ccint{0,T}$
\begin{equation}
\label{eq:martingale_pbm}
\textstyle{\bfZ_t^\varphi = \varphi(\bfX_t) - \int_0^t \generator_s(\varphi)(\bfX_s) \rmd s }, \qquad  \generator_t(\varphi)(x) = \langle b(t, x) , \nabla \varphi(x) \rangle +  \Delta \varphi(x).
\end{equation}
Here $\rmc_c^2(\rset^d, \rset)$ denotes the space of twice differentiable functions from $\rset^d$ to $\rset$ with compact support.
Doing so, $\Pbb$ is uniquely defined up to the initial distribution $\Pbb_0$. Finally, for any
$\Pbb \in \Pens(\contspace)$, we introduce $\Pbb^R$ the time reversal of $\Pbb$,
\ie \ for any $\msa \in \mcb{\contspace}$ we have $\Pbb^R(\msa) = \Pbb(\msa^R)$
where $\msa^R = \ensembleLigne{t \mapsto\omega(T-t)}{\omega \in \msa}$.

\subsection{Continuous-time CSGM}

Recall that in the unconditional setting, we consider a forward noising dynamics
$(\bfX_t)_{t \in \ccint{0,T}}$ initialized with $\bfX_0 \sim \pdata$ and
satisfying the following Stochastic Differential Equation (SDE)
$\rmd \bfX_t = -\bfX_t \rmd t + \sqrt{2} \rmd \bfB_t$, i.e. an
Ornstein--Uhlenbeck process. In this case, under entropy condition on
$(\bfX_t)_{t \in \ccint{0,T}}$ (see \cite{cattiaux2021time} for instance) we
have that the time-reversal process
$(\tbfX_t)_{t \in \ccint{0,T}} = (\bfX_{T-t})_{t \in \ccint{0,T}}$ also satisfy
an SDE given by
$\rmd \tbfX_t = \{\tbfX_t + 2 \nabla \log p_{T-t}(\tbfX_t)\} \rmd t + \sqrt{2} \rmd
\bfB_t$, where $p_t$ is the density of $\bfX_t$ w.r.t. the Lebesgue
measure, and $(\tbfX_t)_{t \in \ccint{0,T}}$ is initialized with
$\tbfX_0 \sim \mathcal{L}(\bfX_T)$, the law of $\bfX_T$ of density $q_T$. Using the geometric ergodicity of the
Ornstein--Uhlenbeck process, $\mathcal{L}(\bfX_T)$ is close (w.r.t.
to the Kullback--Leibler divergence for instance) to $\pref= \mathcal{N}(0,\Id)$. Hence,
we obtain that considering $(\bfZ_t)_{t \in \ccint{0,T}}$ such that
$\bfZ_0 \sim \mathcal{N}(0,\Id)$ and
$\rmd \bfZ_t = \{\bfZ_t + 2 \nabla \log p_{T-t}(\bfZ_t)\} \rmd t + \sqrt{2} \rmd
\bfB_t$, $\bfZ_T$ is approximately distributed according to $\pdata$. The
Euler--Maruyama discretization of $(\bfZ_t)_{t \in \ccint{0,T}}$ is the SGM used
in existing work.

In the conditional setting, we consider the following dynamics
$\rmd \bfX_t = -\bfX_t \rmd t + \sqrt{2} \rmd \bfB_t$ and $\rmd \bfY_t = 0$,
where $(\bfX_0, \bfY_0) \sim \pjoin$. Note that we have $\bfY_t = \bfY_0$ for
all $t \in \ccint{0,T}$. Using the ergodicity of the Ornstein--Uhlenbeck process,
we get that $\mathcal{L}(\bfX_T, \bfY_t)$ is close (w.r.t. to the
Kullback--Leibler divergence for instance) to $\pjref$. Let
$(\tbfX_t, \tbfY_t)_{t \in \ccint{0,T}} = (\bfX_{T-t}, \bfY_{T-t})_{t \in
  \ccint{0,T}}$. We have that
$\rmd \tbfX_t = \{\tbfX_t + 2 \nabla \log p_{T-t}(\tbfX_t|\tbfY_t)\} \rmd t +
\sqrt{2} \rmd \bfB_t$ and $\rmd \tbfY_t = 0$ with $\tbfX_0,\tbfY_0 \sim \mathcal{L}(\bfX_T,\bfY_T)$. Hence, we obtain that considering
$(\bfZ_t)_{t \in \ccint{0,T}}$ such that $(\bfZ_0, \bfY_0) \sim \pjref$ and
$\rmd \bfZ_t = \{\bfZ_t + 2 \nabla \log p_{T-t}(\bfZ_t|\bfY_0)\} \rmd t + \sqrt{2}
\rmd \bfB_t$, $\bfZ_T$ is approximately distributed according to $\pdata$. The
Euler--Maruyama discretization of $(\bfZ_t,\bfY_t)_{t \in \ccint{0,T}}$ is the
conditional SGM.

\subsection{Connection with normalizing flows and estimation of the evidence}\label{sec:NFevidence}

It has been shown that SGMs can be used for log-likelihood computation. Here, we
further show that they can be used to estimate the evidence
$\log p(\yobs)$ when $g(\yobs|x)$ can be computed pointwise. This is the case for many models considered in the diffusion literature, see for
instance \cite{kadkhodaie2021stochastic,kawar2021snips,kawar2022denoising}. Indeed, we have that for any $x \in \rset^d$,
$\log p(\yobs) = \log g(\yobs|x) + \log p(x) - \log p(x|\yobs)$. The term $\log p(x)$ can be estimated using an unconditional SGM whereas the term
$\log p(x|\yobs)$ can be estimated using a CSGM. Note that both conditional and unconditional SGM can be trained simultaneously adding a
``sink'' state to $\mcy$, i.e. considering $\mcy \cup \{\emptyset\}$, see
\cite{ho2021classifier} for instance.

We briefly explain how one can compute $\log p(x|\yobs)$ and refer to
\cite{song2020score} for a similar discussion in the unconditional
setting. Recall that the forward noising process is given by
$\rmd \bfX_t = -\bfX_t \rmd t + \sqrt{2} \rmd \bfB_t$ and $\rmd \bfY_t = 0$,
where $(\bfX_0, \bfY_0) \sim \pjoin$. We introduce another process
$(\hbfX_t, \hbfY_t)_{t \in \ccint{0,T}}$ with deterministic dynamics which has the same marginal distributions, i.e. $\mathcal{L}(\bfX_T,\bfY_T)=\mathcal{L}(\hbfX_T,\hbfY_T)$. This process is defined by $\rmd \hbfX_t = \{-\hbfX_t - \nabla \log p_t(\hbfX_t|\hbfY_t)\} \rmd t$ and $\rmd \hbfY_t = 0$ with
$(\hbfX_0, \hbfY_0) \sim \pjoin$. As one has
$\rmd \log p_t(\hbfX_t|\hbfY_t) = \mathrm{div}(-\hbfX_t - \nabla \log
p_t(\hbfX_t|\hbfY_t)) \rmd t$, we can approximately compute
$\log p(\hbfX_0|\hbfY_0)$ by integrating numerically this Ordinary Differential Equation (ODE). There are practically three sources of errors, one is the score approximation, one is the numerical integration error and the last one one is due to the fact that $\mathcal{L}(\hbfX_T)$ is unknown so we use the approximation $\mathcal{L}(\hbfX_T) \approx \pref$.

\subsection{Continuous-time CDSB}

In this section, we introduce an IPF algorithm for solving CSB
problems in continuous-time. The following results are a generalization to the conditional framework of the continuous-time results of
\cite{debortoli2021neurips}. The CDSB algorithm described in \Cref{algo:ipf_score_cond} can be seen as a Euler--Maruyama discretization of
this IPF scheme combined to neural network approximations of the drifts. Let $\Pbb \in \Pens(\contspace)$ be a given reference measure
(thought as the continuous time analog of $\bar{p}$).  The dynamical continuous
formulation of the SB problem can be written as follows
\begin{equation}
  \label{eq:dynamic_schro}
  \textstyle{
    \Pi^\star = \argmin \ensemble{\KLLigne{\Pi}{\Pbb}}{\Pi \in \Pens(\mathcal{C}), \ \Pi_0 = \pjoin, \ \Pi_T = \pjref}.
    }
\end{equation}
We define the IPF $(\Pi^n)_{n \in \nset}$ such that $\Pi^0 = \Pbb$ and
associated with $\rmd \bfX_t = - \bfX_t + \sqrt{2} \rmd \bfB_t$ and
$\rmd \bfY_t = 0$, with $(\bfX_0, \bfY_0) \sim \pjoin$. Next for any
$n \in \nset$ we define
\begin{align}
  \textstyle{\Pi^{2n+1}} &= \textstyle{\argmin \ensemble{\KLLigne{\Pi}{\Pi^{2n}}}{\Pi \in \Pens(\mathcal{C}), \ \Pi_T = \pjref}, } \\
  \textstyle{\Pi^{2n+2}} &= \textstyle{\argmin \ensemble{\KLLigne{\Pi}{\Pi^{2n+1}}}{\Pi \in \Pens(\mathcal{C}), \ \Pi_0 = \pjoin}.}
\end{align}

The following result is the continuous
counterpart of \Cref{prop:IPFrecursion}.
\begin{proposition}
  \label{prop:continuous_schro}
  Assume that $p_N, \pref >0$, $\mathrm{H}(\pref)<+\infty$ and
  $\int_{\rset^d} \absLigne{\log p_{N|0}(x_N|x_0)} \pdata(x_0) \pref(x_N) <
  +\infty$. In addition, assume that there exist $\Mbb \in \Pens(\contspace)$,
  $U \in \rmc^1(\rset^d, \rset)$, $C \geq 0$ such that for any $n \in \nset$,
  $x \in \rset^d$, $\KLLigne{\Pi^n}{\Mbb} < +\infty$,
  $\langle x, \nabla U(x) \rangle \geq - C(1+\normLigne{x}^2)$ and $\Mbb$ is
  associated with $(\bfX_t, \bfY_t)_{t \in \ccint{0,T}}$ such that 
  \begin{equation}
    \label{eq:diff_q}
    \textstyle{
      \rmd \bfX_t = -\nabla U(\bfX_t) \rmd t + \sqrt{2} \rmd \bfB_t, \qquad \rmd \bfY_t = 0
      }
    \end{equation}
    with $\bfX_0$ distributed according to the invariant distribution of \eqref{eq:diff_q}.  Then,
    for any $n \in \nset$ we have:
  \begin{enumerate}[wide, labelwidth=!, itemindent=!, labelindent=0pt, label=(\alph*)]
  \item $(\Pi^{2n+1})^R$ is associated with
    $(\bfX_t^{2n+1},\bfY_t^{2n+1})_{t \in \ccint{0,T}}$ such that
    $\rmd \bfX_t^{2n+1} = b^n_{T-t}(\bfX_t^{2n+1},\bfY_t^{2n+1}) \rmd t + \sqrt{2} \rmd
    \bfB_t$ and $\rmd \bfY_t^{2n+1} = 0$ with
    $(\bfX_0^{2n+1},\bfY_0^{2n+1}) \sim \pjref$;
  \item $\Pi^{2n+2}$ is associated with
    $\rmd \bfX_t^{2n+2} = f^{n+1}_t( \bfX_t^{2n+2},\bfY_t^{2n+2}) \rmd t + \sqrt{2} \rmd
    \bfB_t$ with $(\bfX_0^{2n+2},\bfY_0^{2n+2}) \sim \pjoin$;
  \end{enumerate}
  \vspace{-.3cm} where for any $n \in \nset$, $t \in \ccint{0,T}$,
  $x \in \rset^d$ and $y \in \mcy$,
  $b^{n}_t(x,y) = -f^{n}_t(x,y) +2 \nabla \log p^{n}_t(x|y)$,
  $f^{n+1}_t(x,y) = -b^n_t(x,y) +2 \nabla \log q^n_t(x|y)$, with
  $f^0_t(x) = -x$, and $p^n_t(\cdot|y)$, $q_t^n(\cdot|y)$ the densities of
  $\Pi^{2n}_{t|y}$ and $\Pi_{t|y}^{2n+1}$.
\end{proposition}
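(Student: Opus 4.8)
The plan is to reduce the conditional continuous-time IPF to the unconditional one established in \cite{debortoli2021neurips} by exploiting that the $\bfY$-coordinate is frozen. First I would observe that since $\Mbb$ is associated with $\rmd \bfY_t = 0$ and each iterate satisfies $\KLLigne{\Pi^n}{\Mbb} < +\infty$, every $\Pi^n$ is absolutely continuous with respect to $\Mbb$; hence under $\Pi^n$ the coordinate process satisfies $\bfY_t = \bfY_0$ almost surely for all $t \in \ccint{0,T}$. This lets me disintegrate each iterate along the value $y$ of the constant observation coordinate, writing $\Pi^n = \int_{\mcy} \Pi^n_{|y}\, \rmd \pobs(y)$ with $\Pi^n_{|y}$ a path measure on the $\bfX$-coordinate alone, and the $\bfY$-marginal of every $\Pi^n$ equal to $\pobs$ (preserved because both $\pjoin$ and $\pjref$ have $\bfY$-marginal $\pobs$, exactly as in the discrete \Cref{prop:IPFrecursion}).

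Next I would apply the chain rule for the Kullback--Leibler divergence along the frozen coordinate, \cite[Theorem 2.4]{leonard2014some}, to obtain $\KLLigne{\Pi}{\Pi^{2n}} = \expeLigne{\KLLigne{\Pi_{|Y}}{\Pi^{2n}_{|Y}}}$ with the expectation over $Y \sim \pobs$, the $\bfY$-marginal term vanishing since every competitor shares the marginal $\pobs$. Because the endpoint constraints $\Pi_T = \pjref$ and $\Pi_0 = \pjoin$ are constraints on the $(\bfX,\bfY)$-marginals that factorize as $\pref(\cdot|y)\pobs(y)$ and $p(\cdot|y)\pobs(y)$, the half-bridge minimization decouples into a family, indexed by $y$, of unconditional continuous-time half-bridge problems for the $\bfX$-dynamics conditioned on $Y=y$; for each $y$ the reference is $\Pbb_{|y}$, the Ornstein--Uhlenbeck law $\rmd \bfX_t = -\bfX_t \rmd t + \sqrt{2}\rmd\bfB_t$ initialized at $p(\cdot|y)$.

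I would then invoke the unconditional continuous-time result of \cite{debortoli2021neurips} for each conditional problem. Its proof establishes that a half-bridge minimizing KL subject to a single endpoint constraint preserves the (time-reversed) diffusion dynamics of the previous iterate and merely replaces the constrained marginal, and that the time-reversal of such a diffusion has drift given by the Haussmann--Pardoux formula, as in \cite{cattiaux2021time}. Applied fibrewise this yields, for $\pobs$-almost every $y$, that $(\Pi^{2n+1}_{|y})^R$ is associated with drift $b^n_{T-t}(\cdot,y)$ and initial law $\pref(\cdot|y)$, and $\Pi^{2n+2}_{|y}$ with drift $f^{n+1}_t(\cdot,y)$ and initial law $p(\cdot|y)$, where $b^n_t(x,y) = -f^n_t(x,y) + 2\nabla\log p^n_t(x|y)$ and $f^{n+1}_t(x,y) = -b^n_t(x,y) + 2\nabla\log q^n_t(x|y)$ come directly from the time-reversal formula with $p^n_t(\cdot|y), q^n_t(\cdot|y)$ the densities of $\Pi^{2n}_{t|y}, \Pi^{2n+1}_{t|y}$. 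Reassembling over $y$ and appending the trivial dynamics $\rmd \bfY_t = 0$ gives statements (a) and (b), with base case $f^0_t(x)=-x$.

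The main obstacle is verifying that the hypotheses required by the time-reversal theorem descend to $\pobs$-almost every conditional problem. Concretely, I must check that $\KLLigne{\Pi^n}{\Mbb}<+\infty$ together with the dissipativity bound $\langle x, \nabla U(x)\rangle \geq -C(1+\normLigne{x}^2)$, which control non-explosion and finite entropy production for the joint process, imply the analogous finiteness of $\KLLigne{\Pi^n_{|y}}{\Mbb_{|y}}$ and the integrability of the conditional score $\nabla\log p^n_t(\cdot|y)$ for $\pobs$-almost every $y$; this follows from the same chain-rule decomposition of the entropy, but care is needed because the disintegration produces a $y$-dependent family of SDEs, so one must ensure the exceptional $\pobs$-null set can be discarded uniformly in $n$ and $t$. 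The entropy conditions $\mathrm{H}(\pref)<+\infty$ and $\int_{\rset^d} \absLigne{\log p_{N|0}(x_N|x_0)}\,\pdata(x_0)\,\pref(x_N) < +\infty$ anchor the induction exactly as in the unconditional argument.
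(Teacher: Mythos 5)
Your proposal is correct and follows essentially the same route as the paper: the paper's proof is a one-line appeal to the unconditional continuous-time result \cite[Proposition 6]{debortoli2021neurips}, declaring the conditional case a straightforward extension, and your fibrewise disintegration along the frozen $\bfY$-coordinate (KL chain rule, decoupling into $y$-indexed unconditional half-bridge problems, then reassembling with the trivial $\rmd \bfY_t = 0$ dynamics) is precisely the content of that extension. You simply spell out the details, including the hypothesis-transfer step to $\pobs$-almost every fibre, that the paper leaves implicit.
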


\begin{proof}
  The proof of this proposition is a straightforward extension of \cite[Proposition 6]{debortoli2021neurips}.
\end{proof}

We have seen in \Cref{sec:NFevidence} that it is possible to use CSGM to evaluate numerically the evidence when $g(\yobs|x)$ can be computed pointwise. The same strategy can be applied to both DSB and CDSB; see \cite[Section H.3]{debortoli2021neurips} for details for DSB. In both cases, there exists an ordinary differential equation admitting the same marginals as the diffusion solving the SB, resp. the CSB, problem. By integrating these ODEs, we can obtain $\log p(x)$ and $\log p(x|\yobs)$ for any $x$ and thus can compute the evidence. Contrary to SGM and CSGM, the terminal state of the diffusion is exactly equal to the reference measure by design. So practically, we only have two instead of three sources of errors for SGM/CSGM: one is the drift approximation, one is the numerical integration error.

\section{Forward-Backward Sampling}
\label{sec:forw-backw-sampl}

We detail in this section the forward-backward sampling approach and its connection with \cite{spantini2019coupling} when using an unconditional $\pref$. In \cite{spantini2019coupling}, it is proposed to first learn a
deterministic transport map
$\mathcal{U}(x,y):\mathcal{X}\times\mathcal{Y}\to\mathcal{X}\times\mathcal{Y}$
from $(X,Y)\sim\pjoin$ to $\pjref$,
then transport back the $X$-component through
$\mathcal{S}(\cdot,\yobs)^{-1}$ where $\mathcal{S}:\mathcal{X}\times\mathcal{Y}\to\mathcal{X}$ is the $X$-component of $\mathcal{U}$.  In other words, this is to say sampling
$\hat{X}^{\textup{pos}} \sim p(x|\yobs)$ corresponds to the two-step transformation
\begin{equation}\label{eq:composedmap-supp}
\hat{X}^{\textup{ref}},\hat{Y}^{\textup{ref}}=\mathcal{U}(X,Y),~~ \hat{X}^{\textup{pos}}=\mathcal{S}(\cdot,\yobs)^{-1}(\hat{X}^{\textup{ref}}).
\end{equation}

The proposed CSB \eqref{eq:conditionalSBextended} can be thought of as the SB version of this idea. We learn a stochastic transport map from
$\pjoin(x,y)$ to $\pref(x,y)$. The CSB $\pi^{\star}$ defines, when conditioned on
$x_{0}$ and $\yobs$, a (stochastic) transport map $\pi^{c,\star}_{\yobs}(x_{N}|x_{0})$
from $p(x_{0}|\yobs)$ to $\pref(x_{N})$; and, when
conditioned on $x_{N}$ and $\yobs$, a (stochastic) transport map
$\pi^{c,\star}_{\yobs}(x_{0}|x_{N})$ from $\pref(x_{N})$ to $p(x_{0}|\yobs)$. In
practice, we learn using CDSB separate half-bridges
$\bar{p}^{L}(x_{1:N}|x_{0},\yobs)$ and $\bar{q}^{L}(x_{0:N-1}|x_{N},\yobs)$.

\cite{spantini2019coupling} remarked that, since the estimator ${\mathcal{S}}$
may be imperfect, $\hat{X}^{\textup{ref}}$ may not have distribution $\pref$
exactly. In this case, \eqref{eq:composedmap-supp} allows for the cancellation of errors between $\mathcal{S}$ and
$\mathcal{S}(\cdot,\yobs)^{-1}$. 

We can exploit a similar idea in the CSB framework by defining an analogous forward-backward sampling procedure 
\begin{equation}
\hat{X}_{N}\sim \bar{p}^{L}_{N|0}(x_{N}|X,Y),~~\hat{X}_{0}\sim \bar{q}^{L}_{0|N}(x_{0}|\hat{X}_{N},\yobs).\label{eq:fwdbwdsampling-supp}
\end{equation}
As $\bar{q}^{L}$ is the approximate time reversal of $\bar{p}^{L}$,
\eqref{eq:fwdbwdsampling-supp} exhibits similar advantages as \eqref{eq:composedmap-supp}
when the half-bridge $\bar{p}^{L}(x_{0:N}|\yobs)$ is only an approximation to the CSB solution. While the forward and backward processes are stochastic
and are not exact inverses of each other, using this forward-backward sampling
may inevitably lead to increased variance. However, we found in practice that
this forward-backward sampling procedure can still improve sampling quality (see \eg ~ Figures \ref{fig:2dconditional}, \ref{fig:mnistinpainting}).

\section{Experimental Details}
\label{sec:experimental-details}

\subsection{Experimental Setup}

\textbf{Network parameterization}. Two parameterizations are possible for learning $\mathbf{F}$ and $\mathbf{B}$. In the main text, we described one parameterization in which we parameterize $\mathbf{F},\mathbf{B}$ directly as $\mathbf{F}_\phi^y(k,x),\mathbf{B}_\theta^y(k,x)$ and learn the network parameters $\phi,\theta$. Alternatively, we can parameterize $\mathbf{F}^y(k,x)=x+\gamma_{k+1}\mathbf{f}_\phi^y(k,x),\mathbf{B}^y(k+1,x)=x+\gamma_{k+1}\mathbf{b}_\theta^y(k+1,x)$ and learn the network parameters $\phi,\theta$ for $\mathbf{f}_\phi^y,\mathbf{b}_\theta^y$ instead. 
For the 2D and BOD examples, we use a fully connected network with positional encodings as in \cite{debortoli2021neurips} to learn $\mathbf{f}_\phi^y,\mathbf{b}_\theta^y$, with $y$ as an additional input by concatenation with $x$. 
For the MNIST and CelebA examples, we follow earlier work and utilize the conditional U-Net architecture in \cite{nichol2021beatgans}. Since residual connections are already present in the U-Net architecture, we can adopt the $\mathbf{F}_\phi^y,\mathbf{B}_\theta^y$ parameterization. In our experiments, we experiment with both parameterizations and find that the $\mathbf{f}_\phi^y,\mathbf{b}_\theta^y$ parameterization is more suitable for neural network architectures without residual connections. On the other hand, both parameterizations obtained good results when using the U-Net architecture. For consistency, all reported image experiment results use the $\mathbf{F}_\phi^y,\mathbf{B}_\theta^y$ parameterization, and we leave the choice of optimal parameterization as future research. 

\textbf{Network warm-starting}. As observed by \cite{debortoli2021neurips}, since the networks at IPF iteration $n$ are close to the networks at iteration $n-1$, it is possible to warm-start $\phi^n,\theta^n$ at $\phi^{n-1},\theta^{n-1}$ respectively. Empirically, we observe that this approach can significantly reduce training time at each CDSB iteration. Compared to CSGM, we usually observe immediate improvement in $
\mathbf{B}_{\theta^2}$ during CDSB iteration 2 when the network is warm-started at $\theta^1$ after CDSB iteration 1 (see \eg ~ \Cref{fig:psnrimprove}). As CSGM corresponds to the training objective of $\theta^1$ at CDSB iteration 1, this shows that the CDSB framework is a generalization of CSGM with observable benefits starting CDSB iteration 2. 
\begin{wrapfigure}{r}{5cm}
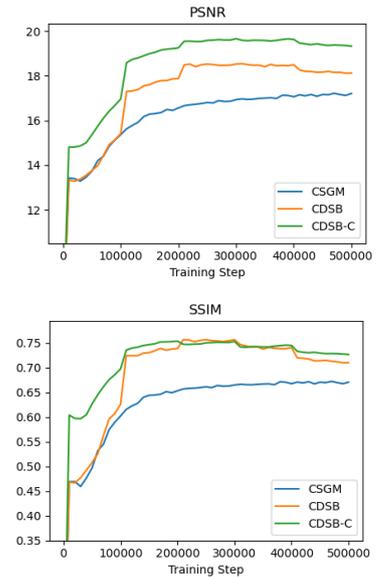

    \centering
    \vspace{-0.3cm}
    \includegraphics[width=.98\linewidth]{Plots/MNIST_superres_supp/N=5_PSNR.png} \\
    \includegraphics[width=.98\linewidth]{Plots/MNIST_superres_supp/N=5_SSIM.png}
    \vspace{-0.2cm}
    \caption{Test set PSNR and SSIM against the number of training steps for MNIST 4x SR.}
    \label{fig:psnrimprove}
    \vspace{-1.2cm}
\end{wrapfigure}

\textbf{Conditional initialization}. 
In the main text, we considered joint reference measures of the form
$\pjref(x,y)=\pref(x|y)\pobs(y)$ 
and simple choices for $\pref(x|y)$ such as 
$\vois(x;y,\sigma_{\textup{ref}}^{2} \Id)$ for image super-resolution. We also explore two more choices for $\pref(x|y)$ in our experiments. The first choice simply replaces the initialization mean from $y$ to a neural network function $\mu_\textup{ref}(y)$. 
This neural network can be pre-trained directly to estimate the conditional mean of $p(x|y)$ using standard regression with MSE loss. In the case of multi-modal $p(x|y)$ such as in the case of image inpainting, we can also train $\mu_\textup{ref}(y)$ to estimate the conditional mean of $p_N(x_N|y)$, where $x_N$ follows a standard diffusion process. In essence, we can train $\mu_\textup{ref}(y)$ to facilitate $p_N(x_N|y)\approx\pref(x_N|y)$ and shorten the noising process. Note that the CDSB framework is still useful in this context since $p_N(x_N|y)$ may not be well-approximated by a Gaussian distribution, which is precisely the issue CDSB is designed to tackle. 
Another class of conditional initialization we consider is the Ensemble Kalman Filter (EnKF), which is an ensemble-based method approximating linear Gaussian posterior updates. In this case, $\pref(x|y)$ is taken to be $\vois(x;\mu_{\textup{ref}}(y),\textup{diag}(\sigma_{\textup{ref}}^{2}(y))$ where $\mu_{\textup{ref}}(y),\sigma_{\textup{ref}}^{2}(y)$ are the sample mean and variance of the EnKF posterior ensemble. Intuitively, $\pref(x|y)$ is now an approximation of the true posterior $p(x|y)$ using linear prior-to-posterior mappings, which is further corrected for non-linearity and non-Gaussianity by the CDSB.

\textbf{Time step schedule}. For the selection of the time step sequence $\{\gamma_k\}_{k=1}^{N}$, we follow \cite{ho2020denoising,nichol2021beatgans} and consider a linear schedule where $\gamma_1=\gamma_\textup{min}$, $\gamma_N=\gamma_\textup{max}$, and $\gamma_k=\gamma_\textup{min}+\frac{k-1}{N-1}(\gamma_\textup{max}-\gamma_\textup{min})$. In this way, the diffusion step size gets finer as the reverse process approaches $\pi_0=\pdata$, so as to increase the accuracy of the generated samples.

\subsection{2D Synthetic Examples}
For the 2D examples, we use $N=50$ diffusion steps and choose the time step schedule such that $\gamma_\textup{min}={10}^{-4},\gamma_\textup{max}=0.005$. At each IPF iteration, we train the network for 30,000 iterations using the Adam optimizer with learning rate ${10}^{-4}$ and a batch size of 100.

\subsection{Biochemical Oxygen Demand Model}
For the BOD example, we again use $N=50$ diffusion steps with time schedule $\gamma_\textup{min}=\gamma_\textup{max}=0.01$. For CDSB-C, we use the shortened time schedule $\gamma_\textup{min}=\gamma_\textup{max}=0.005$ and a neural network regressor of the same architecture (with $x$ and $k$ components removed) as the conditional initialization. The batch size and optimizer settings are the same as above. 

We report the estimated posterior moments as well as their standard deviation in \Cref{tab:bodresult-supp}. We further plot the convergence of RMSE for each of the statistics in \Cref{fig:bodconvergence}.
As can be observed, IPF converges after about 20 iterations, and errors for all statistics are improved compared with CSGM (corresponding to IPF iteration 1).
Using conditional initialization also helps with localizing the problem and reduces estimation errors especially in early iterations.

\tabcolsep=0.2cm
\begin{table}[t]
\small
    \begin{centering}
    \begin{tabular}{|c|c|c|c|c|c|c|c|}
    \hline 
    \multirow{1}{*}{} &  & MCMC & CDSB & CDSB-FB & CDSB-C & MGAN & IT\tabularnewline
    \hline 
    \multirow{2}{*}{Mean} & $x_{1}$ & .075 & .066\textpm.010 & .068\textpm.010 & \textbf{.072\textpm.007} & .048 & .034\tabularnewline
    \cline{2-8} \cline{3-8} \cline{4-8} \cline{5-8} \cline{6-8} \cline{7-8} \cline{8-8} 
     & $x_{2}$ & .875 & .897\textpm.019 & .897\textpm.017 & \textbf{.891\textpm.013} & .918 & .902\tabularnewline
    \hline 
    \multirow{2}{*}{Var} & $x_{1}$ & .190 & .184\textpm.007 & \textbf{.190\textpm.007} & .188\textpm.005 & .177 & .206\tabularnewline
    \cline{2-8} \cline{3-8} \cline{4-8} \cline{5-8} \cline{6-8} \cline{7-8} \cline{8-8} 
     & $x_{2}$ & .397 & .387\textpm.006 & .391\textpm.006 & \textbf{.393\textpm.005} & .419 & .457\tabularnewline
    \hline 
    \multirow{2}{*}{Skew} & $x_{1}$ & 1.94 & \textbf{1.90\textpm.038} & 2.01\textpm.041 & \textbf{1.90\textpm.028} & 1.83 & 1.63\tabularnewline
    \cline{2-8} \cline{3-8} \cline{4-8} \cline{5-8} \cline{6-8} \cline{7-8} \cline{8-8} 
     & $x_{2}$ & .681 & .591\textpm.018 & .628\textpm.018 & .596\textpm.014 & \textbf{.630} & .872\tabularnewline
    \hline 
    \multirow{2}{*}{Kurt} & $x_{1}$ & 8.54 & 7.85\textpm.210 & \textbf{8.54\textpm.239} & 8.00\textpm.147 & 7.64 & 7.57\tabularnewline
    \cline{2-8} \cline{3-8} \cline{4-8} \cline{5-8} \cline{6-8} \cline{7-8} \cline{8-8} 
     & $x_{2}$ & 3.44 & 3.33\textpm.035 & \textbf{3.51\textpm.041} & 3.27\textpm.035 & 3.19 & 3.88\tabularnewline
    \hline 
    \end{tabular}
    \par\end{centering}
\caption{\label{tab:bodresult-supp} Estimated posterior moments and their standard deviations for the BOD example. The closest estimates to MCMC are highlighted in bold. }
\end{table}

\begin{figure}[t]
\begin{centering}
\includegraphics[width=17cm]{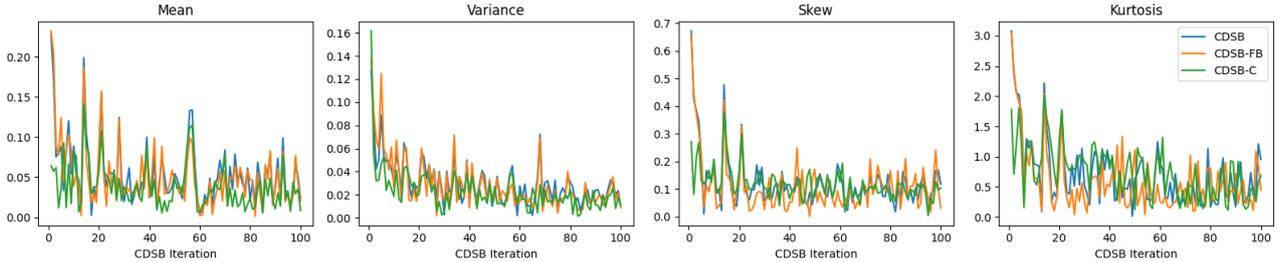}
\par\end{centering}
\caption{\label{fig:bodconvergence} Convergence of estimated posterior moments with increasing number of CDSB iterations. }
\end{figure}

\subsection{Image Experiments}
\label{subsec:imageexperimentdetail}
For all image experiments, we use the Adam optimizer with learning rate ${10}^{-4}$ and train for 500k iterations in total. Since both $\mathbf{F}$ and $\mathbf{B}$ needs to be trained, the training time is approximately doubled for CDSB. Following \cite{song2020improved}, we make use of the exponential moving average (EMA) of the network parameters with EMA rate 0.999 at test time. We use $\gamma_\textup{min}=5\times{10}^{-5}$ for all experiments unless indicated otherwise and perform a parameter sweep for $\gamma_\textup{max}$ in $\{0.005, 0.01, 0.05, 0.1\}$. The optimal $\gamma_\textup{max}$ depends on the number of timesteps $N$ and the discrepancy between $p(x|y)$ and $\pref$. When using large $N$ or conditional $\pref(x|y)$, we find $\gamma_\textup{max}$ can be taken smaller. 

\subsubsection{MNIST}
For the MNIST dataset, we use a U-Net architecture with 3 resolution levels each with 2 residual blocks. The numbers of filters at each resolution level are 64, 128, 128 respectively. The total number of parameters is 6.6m, and we use batch size 128 for training. Since we observe overfitting on the MNIST training set for all methods, we also apply dropout with $p=0.1$ for the MNIST experiments. For each CDSB iteration, 100k or 250k training steps are used, corresponding to $L=5$ or $L=2$ CDSB iterations in total, which we find to be sufficient on this simpler dataset. 

For $N=10$, CDSB generates a minibatch of 100 images in approximately 0.8 seconds when run on a GTX 1080Ti. 
As a baseline comparison, we experimented with the methodology in \cite{kadkhodaie2021stochastic}  on the same MNIST test set and find that it gives PSNR/SSIM values of 15.78/0.72 and 12.49/0.47 for super-resolution and inpainting respectively (\textit{c.f.} \Cref{tab:imagemetrics}). Around 250 iterations are required for generating each image, or approximately 1 second generation time for 1 image on a GTX 1080Ti. In comparison, the CDSB methodology is much more efficient and achieves better image quality on both tasks. 

\begin{figure}[h]
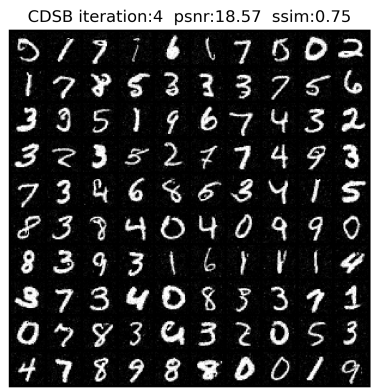
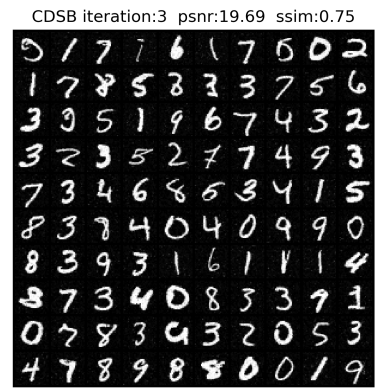
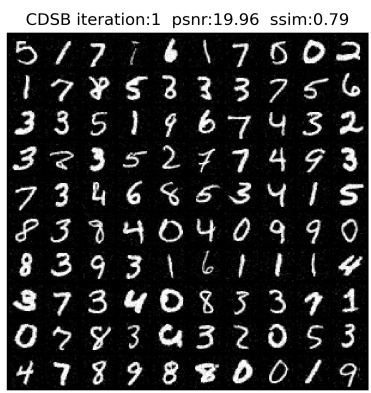
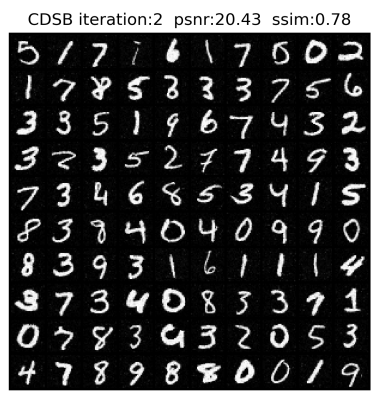
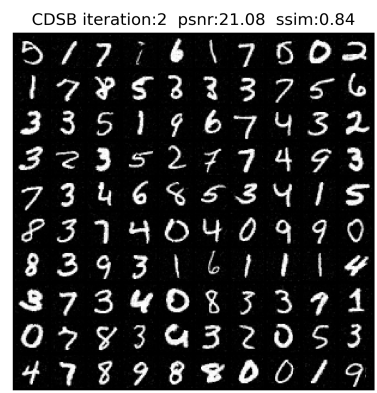

\begin{centering}
\subfloat[$\yobs$]{\includegraphics[width=5cm]{Plots/MNIST_superres_supp/Cond.png}} ~~
\subfloat[Ground truth]{\includegraphics[width=5cm]{\string"Plots/MNIST_superres_supp/True_data\string".png}} \\
\subfloat[CSGM $N=5$]{\includegraphics[width=5cm]{\string"Plots/MNIST_superres_supp/N=5_CDiff\string".png}} ~~
\subfloat[CDSB $N=5$]{\includegraphics[width=5cm]{\string"Plots/MNIST_superres_supp/N=5_CDSB\string".png}} ~~
\subfloat[CDSB-C $N=5$]{\includegraphics[width=5cm]{\string"Plots/MNIST_superres_supp/N=5_CDSB-Cond\string".png}} \\
\subfloat[CSGM $N=10$]{\includegraphics[width=5cm]{\string"Plots/MNIST_superres_supp/N=10_CDiff\string".png}} ~~
\subfloat[CDSB $N=10$]{\includegraphics[width=5cm]{\string"Plots/MNIST_superres_supp/N=10_CDSB\string".png}} ~~
\subfloat[CDSB-C $N=10$]{\includegraphics[width=5cm]{\string"Plots/MNIST_superres_supp/N=10_CDSB-Cond\string".png}}
\par\end{centering}
\caption{Additional samples for the MNIST 4x SR task.}
\end{figure}  

\begin{figure}[h]
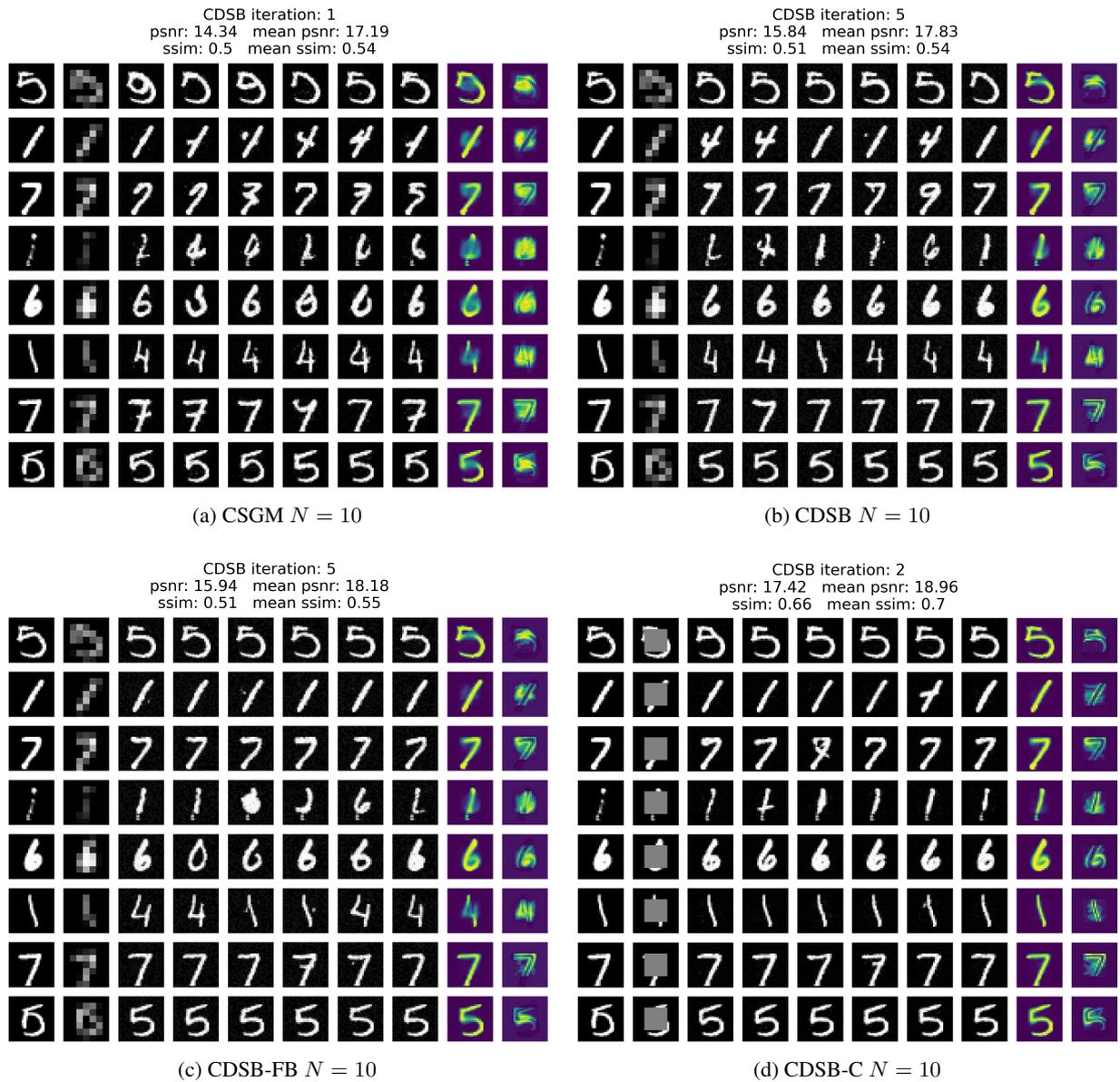

\begin{centering}
\subfloat[CSGM $N=10$]{\includegraphics[width=8cm]{\string"Plots/MNIST_inpaint/N=10_CDiff\string".png}} ~~
\subfloat[CDSB $N=10$]{\includegraphics[width=8cm]{\string"Plots/MNIST_inpaint/N=10_CDSB\string".png}}

\subfloat[CDSB-FB $N=10$]{\includegraphics[width=8cm]{\string"Plots/MNIST_inpaint/N=10_CDSB-FB\string".png}}~~
\subfloat[CDSB-C $N=10$]{\includegraphics[width=8cm]{\string"Plots/MNIST_inpaint/N=10_CDSB-Cond\string".png}}

\par\end{centering}
\caption{\label{fig:mnistinpainting}Uncurated conditional samples for the MNIST 14x14 inpainting task. The first two columns correspond to ground truth, $\yobs$, and the last two columns correspond to the mean and standard deviation of 100 samples. }

\end{figure}

\subsubsection{CelebA 64x64}
For the CelebA dataset, we use a U-Net architecture with 4 resolution levels each with 2 residual blocks and self-attention blocks at $16\times16$ and $8\times8$ resolutions. The numbers of filters at each resolution level are 128, 256, 256, 256 respectively. The total number of parameters is 39.6m, and we use batch size 128 for training. For each CDSB iteration, 10k or 25k training steps are used, corresponding to $L=50$ or $L=20$  CDSB iterations in total. For smaller $\gamma_\textup{max}$, we find that higher number of CDSB iterations are beneficial. 

For $N=20,50$, CDSB generates a minibatch of 100 images in approximately 12, 30 seconds when run on a Titan RTX.
As a baseline comparison, we find that CDSB-C with $N=20$ even outperforms a standard CSGM with $N=200$, which achieves PSNR/SSIM values of approximately 20.98/0.62. To ensure that conditional initialization is not the sole contributor to the gain in sample quality, we further compare CDSB-C ($N=50$) to a CSGM ($N=50$) with conditional initialization. The forward noising process is also modified to the discretized Ornstein--Uhlenbeck process targeting $\pref(x|y)$ as described in \Cref{subsec:targetawareforward}. This modification  achieved PSNR/SSIM values of 20.84/0.59 (\textit{c.f.} \Cref{tab:imagemetrics}), which indicates that the CDSB framework presents larger benefits in addition to conditional initialization. 

As another baseline comparison, the SNIPS algorithm \citep{kawar2021snips} reports PSNR of 21.90 for 8 CelebA test images and, when averaging across 8 predicted samples for each of the images, a PSNR of 24.31. The algorithm requires 2500 iterations for image generation, or approximately 2 minutes for producing 8 samples when run on an RTX 3080 as reported by \cite{kawar2021snips}. On the same test benchmark, CDSB with $N=50$ achieved PSNR  values of 21.87 and 24.20 respectively in 3.1 seconds, thus achieving similar levels of sample quality using much less iterations. Furthermore, the SNIPS algorithm is applicable specifically for tractable linear Gaussian inverse problems, whereas CDSB is more general and does not rely on tractable likelihoods. 

\subsubsection{CelebA 160x160}
We adopt the official implementation and pre-trained checkpoints of SRFlow\footnote{\href{https://github.com/andreas128/SRFlow}{\texttt {https://github.com/andreas128/SRFlow}}} and make use of a higher resolution version of CelebA (160x160) following \cite{lugmayr2020srflow} in only Section \ref{subsubsec:nongaussianref}. For CSGM and CDSB, we use a U-Net architecture with 4 resolution levels each with 2 residual blocks. The numbers of filters at each resolution level are 128, 256, 256, 512 respectively. The total number of parameters is 71.0m while SRFlow has total number of parameters 40.0m. We use a batch size of 32 for training the CSGM and CDSB models. 

When $\pref(x|y)$ is defined by SRFlow, it is infeasible to use a discretized Ornstein--Uhlenbeck process targeting $\pref(x|y)$ as in Section \ref{subsec:targetawareforward}. We instead use a discretized Brownian motion for $p_{k+1|k}$, or equivalently the Variance Exploding (VE) SDE \citep{song2019generative,song2020score}. This has the interpretation as a entropy
regularized Wasserstein-2 optimal transport problem as discussed in \Cref{subsec:linkwithot}, i.e. CDSB-C seeks to minimize the total squared transport distance between SRFlow $\pref(x|y)$ and the true posterior $p(x|y)$. We use the time schedule $\gamma_\textup{min}=\gamma_\textup{max}=0.005$ with comparatively higher $\gamma_\textup{min}$ in order to accelerate convergence under $N=10$ timesteps. We provide additional samples from SRFlow, CDSB-C as well as CSGM-C in Figures \ref{fig:imagecomparison-celeba160-supp1}, \ref{fig:imagecomparison-celeba160-supp2}, \ref{fig:imagecomparison-celeba160-supp3}. 

\begin{figure}[h]
\begin{centering}
\subfloat[$\yobs$]{\includegraphics[height=5.5cm]{Plots/CelebA_superres_supp/N=20 CDiff/im_grid_data_y_repeat0.png}} ~~
\subfloat[Ground truth]{\includegraphics[height=5.5cm]{Plots/CelebA_superres_supp/N=20 CDiff/im_grid_data_x_repeat0.png}} \\
\subfloat[CSGM $N=20$]{\includegraphics[height=5.5cm]{Plots/CelebA_superres_supp/N=20 CDiff/im_grid_last_repeat0.png}} ~~
\subfloat[CDSB $N=20$]{\includegraphics[height=5.5cm]{Plots/CelebA_superres_supp/N=20 CDSB/im_grid_last_repeat0.png}} ~~
\subfloat[CDSB-C $N=20$]{\includegraphics[height=5.5cm]{Plots/CelebA_superres_supp/N=20 CDSB-Cond/im_grid_last_repeat0.png}} \\
\subfloat[CSGM $N=50$]{\includegraphics[height=5.5cm]{Plots/CelebA_superres_supp/N=50 CDiff/im_grid_last_repeat0.png}} ~~
\subfloat[CDSB $N=50$]{\includegraphics[height=5.5cm]{Plots/CelebA_superres_supp/N=50 CDSB/im_grid_last_repeat0.png}} ~~
\subfloat[CDSB-C $N=50$]{\includegraphics[height=5.5cm]{Plots/CelebA_superres_supp/N=50 CDSB-Cond/im_grid_last_repeat0.png}}
\par\end{centering}
\caption{Uncurated samples for the CelebA 4x SR with Gaussian noise task.}
\end{figure}  

\begin{figure}[h]
\begin{centering}
\subfloat[$\yobs$]{\includegraphics[height=5.5cm]{Plots/CelebA_superres_supp/N=20 CDiff/im_grid_data_y_repeat1.png}} ~~
\subfloat[Ground truth]{\includegraphics[height=5.5cm]{Plots/CelebA_superres_supp/N=20 CDiff/im_grid_data_x_repeat1.png}} \\
\subfloat[CSGM $N=20$]{\includegraphics[height=5.5cm]{Plots/CelebA_superres_supp/N=20 CDiff/im_grid_last_repeat1.png}} ~~
\subfloat[CDSB $N=20$]{\includegraphics[height=5.5cm]{Plots/CelebA_superres_supp/N=20 CDSB/im_grid_last_repeat1.png}} ~~
\subfloat[CDSB-C $N=20$]{\includegraphics[height=5.5cm]{Plots/CelebA_superres_supp/N=20 CDSB-Cond/im_grid_last_repeat1.png}} \\
\subfloat[CSGM $N=50$]{\includegraphics[height=5.5cm]{Plots/CelebA_superres_supp/N=50 CDiff/im_grid_last_repeat1.png}} ~~
\subfloat[CDSB $N=50$]{\includegraphics[height=5.5cm]{Plots/CelebA_superres_supp/N=50 CDSB/im_grid_last_repeat1.png}} ~~
\subfloat[CDSB-C $N=50$]{\includegraphics[height=5.5cm]{Plots/CelebA_superres_supp/N=50 CDSB-Cond/im_grid_last_repeat1.png}}
\par\end{centering}
\caption{Uncurated samples for the CelebA 4x SR with Gaussian noise task.}
\end{figure}

\begin{figure}[h]
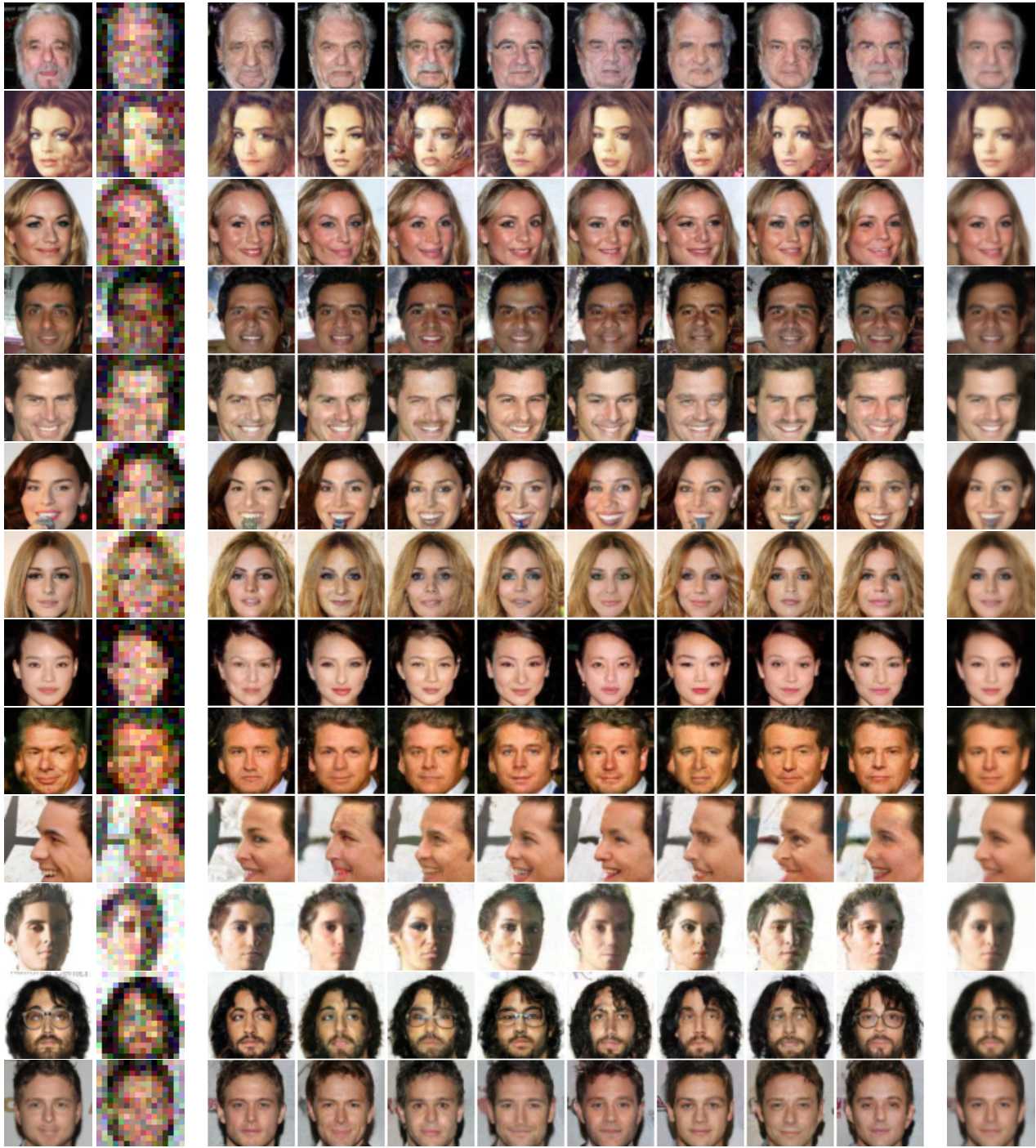

\begin{centering}

\includegraphics[height=1.4cm]{Plots/CelebA_superres_supp/N=50 CDSB-Cond/im_/data_x_0.png}
\includegraphics[height=1.4cm]{Plots/CelebA_superres_supp/N=50 CDSB-Cond/im_/data_y_0.png} ~~
\includegraphics[trim={0 2px 0 2px}, clip, height=1.4cm]{Plots/CelebA_superres_supp/N=50 CDSB-Cond/im_/im_grid_0.png} ~~
\includegraphics[height=1.4cm]{Plots/CelebA_superres_supp/N=50 CDSB-Cond/im_/im_mean_0.png}

\includegraphics[height=1.4cm]{Plots/CelebA_superres_supp/N=50 CDSB-Cond/im_/data_x_1.png}
\includegraphics[height=1.4cm]{Plots/CelebA_superres_supp/N=50 CDSB-Cond/im_/data_y_1.png} ~~
\includegraphics[trim={0 2px 0 2px}, clip, height=1.4cm]{Plots/CelebA_superres_supp/N=50 CDSB-Cond/im_/im_grid_1.png} ~~
\includegraphics[height=1.4cm]{Plots/CelebA_superres_supp/N=50 CDSB-Cond/im_/im_mean_1.png}

\includegraphics[height=1.4cm]{Plots/CelebA_superres_supp/N=50 CDSB-Cond/im_/data_x_2.png}
\includegraphics[height=1.4cm]{Plots/CelebA_superres_supp/N=50 CDSB-Cond/im_/data_y_2.png} ~~
\includegraphics[trim={0 2px 0 2px}, clip, height=1.4cm]{Plots/CelebA_superres_supp/N=50 CDSB-Cond/im_/im_grid_2.png} ~~
\includegraphics[height=1.4cm]{Plots/CelebA_superres_supp/N=50 CDSB-Cond/im_/im_mean_2.png}

\includegraphics[height=1.4cm]{Plots/CelebA_superres_supp/N=50 CDSB-Cond/im_/data_x_3.png}
\includegraphics[height=1.4cm]{Plots/CelebA_superres_supp/N=50 CDSB-Cond/im_/data_y_3.png} ~~
\includegraphics[trim={0 2px 0 2px}, clip, height=1.4cm]{Plots/CelebA_superres_supp/N=50 CDSB-Cond/im_/im_grid_3.png} ~~
\includegraphics[height=1.4cm]{Plots/CelebA_superres_supp/N=50 CDSB-Cond/im_/im_mean_3.png}

\includegraphics[height=1.4cm]{Plots/CelebA_superres_supp/N=50 CDSB-Cond/im_/data_x_4.png}
\includegraphics[height=1.4cm]{Plots/CelebA_superres_supp/N=50 CDSB-Cond/im_/data_y_4.png} ~~
\includegraphics[trim={0 2px 0 2px}, clip, height=1.4cm]{Plots/CelebA_superres_supp/N=50 CDSB-Cond/im_/im_grid_4.png} ~~
\includegraphics[height=1.4cm]{Plots/CelebA_superres_supp/N=50 CDSB-Cond/im_/im_mean_4.png}

\includegraphics[height=1.4cm]{Plots/CelebA_superres_supp/N=50 CDSB-Cond/im_/data_x_5.png}
\includegraphics[height=1.4cm]{Plots/CelebA_superres_supp/N=50 CDSB-Cond/im_/data_y_5.png} ~~
\includegraphics[trim={0 2px 0 2px}, clip, height=1.4cm]{Plots/CelebA_superres_supp/N=50 CDSB-Cond/im_/im_grid_5.png} ~~
\includegraphics[height=1.4cm]{Plots/CelebA_superres_supp/N=50 CDSB-Cond/im_/im_mean_5.png}

\includegraphics[height=1.4cm]{Plots/CelebA_superres_supp/N=50 CDSB-Cond/im_/data_x_6.png}
\includegraphics[height=1.4cm]{Plots/CelebA_superres_supp/N=50 CDSB-Cond/im_/data_y_6.png} ~~
\includegraphics[trim={0 2px 0 2px}, clip, height=1.4cm]{Plots/CelebA_superres_supp/N=50 CDSB-Cond/im_/im_grid_6.png} ~~
\includegraphics[height=1.4cm]{Plots/CelebA_superres_supp/N=50 CDSB-Cond/im_/im_mean_6.png}

\includegraphics[height=1.4cm]{Plots/CelebA_superres_supp/N=50 CDSB-Cond/im_/data_x_7.png}
\includegraphics[height=1.4cm]{Plots/CelebA_superres_supp/N=50 CDSB-Cond/im_/data_y_7.png} ~~
\includegraphics[trim={0 2px 0 2px}, clip, height=1.4cm]{Plots/CelebA_superres_supp/N=50 CDSB-Cond/im_/im_grid_7.png} ~~
\includegraphics[height=1.4cm]{Plots/CelebA_superres_supp/N=50 CDSB-Cond/im_/im_mean_7.png}

\includegraphics[height=1.4cm]{Plots/CelebA_superres_supp/N=50 CDSB-Cond/im_/data_x_8.png}
\includegraphics[height=1.4cm]{Plots/CelebA_superres_supp/N=50 CDSB-Cond/im_/data_y_8.png} ~~
\includegraphics[trim={0 2px 0 2px}, clip, height=1.4cm]{Plots/CelebA_superres_supp/N=50 CDSB-Cond/im_/im_grid_8.png} ~~
\includegraphics[height=1.4cm]{Plots/CelebA_superres_supp/N=50 CDSB-Cond/im_/im_mean_8.png}

\includegraphics[height=1.4cm]{Plots/CelebA_superres_supp/N=50 CDSB-Cond/im_/data_x_9.png}
\includegraphics[height=1.4cm]{Plots/CelebA_superres_supp/N=50 CDSB-Cond/im_/data_y_9.png} ~~
\includegraphics[trim={0 2px 0 2px}, clip, height=1.4cm]{Plots/CelebA_superres_supp/N=50 CDSB-Cond/im_/im_grid_9.png} ~~
\includegraphics[height=1.4cm]{Plots/CelebA_superres_supp/N=50 CDSB-Cond/im_/im_mean_9.png}

\includegraphics[height=1.4cm]{Plots/CelebA_superres_supp/N=50 CDSB-Cond/im_/data_x_10.png}
\includegraphics[height=1.4cm]{Plots/CelebA_superres_supp/N=50 CDSB-Cond/im_/data_y_10.png} ~~
\includegraphics[trim={0 2px 0 2px}, clip, height=1.4cm]{Plots/CelebA_superres_supp/N=50 CDSB-Cond/im_/im_grid_10.png} ~~
\includegraphics[height=1.4cm]{Plots/CelebA_superres_supp/N=50 CDSB-Cond/im_/im_mean_10.png}

\includegraphics[height=1.4cm]{Plots/CelebA_superres_supp/N=50 CDSB-Cond/im_/data_x_11.png}
\includegraphics[height=1.4cm]{Plots/CelebA_superres_supp/N=50 CDSB-Cond/im_/data_y_11.png} ~~
\includegraphics[trim={0 2px 0 2px}, clip, height=1.4cm]{Plots/CelebA_superres_supp/N=50 CDSB-Cond/im_/im_grid_11.png} ~~
\includegraphics[height=1.4cm]{Plots/CelebA_superres_supp/N=50 CDSB-Cond/im_/im_mean_11.png}

\includegraphics[height=1.4cm]{Plots/CelebA_superres_supp/N=50 CDSB-Cond/im_/data_x_12.png}
\includegraphics[height=1.4cm]{Plots/CelebA_superres_supp/N=50 CDSB-Cond/im_/data_y_12.png} ~~
\includegraphics[trim={0 2px 0 2px}, clip, height=1.4cm]{Plots/CelebA_superres_supp/N=50 CDSB-Cond/im_/im_grid_12.png} ~~
\includegraphics[height=1.4cm]{Plots/CelebA_superres_supp/N=50 CDSB-Cond/im_/im_mean_12.png}

\par\end{centering}
\caption{Uncurated conditional samples using CDSB-C with $N=50$ for the CelebA 4x SR with Gaussian noise task. The first two columns correspond to ground truth, $\yobs$, and the last column corresponds to the mean of the middle 8 samples. }
\end{figure}

\begin{figure}
\begin{centering}
\subfloat[$\yobs$]{\includegraphics[height=5.5cm]{Plots/CelebA160_supp/im_grid_data_y_repeat1}}~~
\subfloat[Ground truth]{\includegraphics[height=5.5cm]{Plots/CelebA160_supp/im_grid_data_x_repeat1}}\\
\subfloat[SRFlow]{\includegraphics[height=5.5cm]{Plots/CelebA160_supp/im_grid_srflow_repeat1}}~~
\subfloat[CSGM-C $N=10$]{\includegraphics[height=5.5cm]{Plots/CelebA160_supp/im_grid_csgm_repeat1}}~~
\subfloat[CDSB-C $N=10$]{\includegraphics[height=5.5cm]{Plots/CelebA160_supp/im_grid_cdsb_repeat1}}
\par\end{centering}
\caption{\label{fig:imagecomparison-celeba160-supp1}Additional uncurated samples for the CelebA 8x SR task.}

\end{figure}

\begin{figure}
\begin{centering}
\subfloat[$\yobs$]{\includegraphics[height=5.5cm]{Plots/CelebA160_supp/im_grid_data_y_repeat2}}~~
\subfloat[Ground truth]{\includegraphics[height=5.5cm]{Plots/CelebA160_supp/im_grid_data_x_repeat2}}\\
\subfloat[SRFlow]{\includegraphics[height=5.5cm]{Plots/CelebA160_supp/im_grid_srflow_repeat2}}~~
\subfloat[CSGM-C $N=10$]{\includegraphics[height=5.5cm]{Plots/CelebA160_supp/im_grid_csgm_repeat2}}~~
\subfloat[CDSB-C $N=10$]{\includegraphics[height=5.5cm]{Plots/CelebA160_supp/im_grid_cdsb_repeat2}

}
\par\end{centering}
\caption{\label{fig:imagecomparison-celeba160-supp2}Additional uncurated samples for the CelebA 8x SR task.}
\end{figure}

\begin{figure}
\begin{centering}
\subfloat[$\yobs$]{\includegraphics[height=5.5cm]{Plots/CelebA160_supp/im_grid_data_y_repeat3}}~~
\subfloat[Ground truth]{\includegraphics[height=5.5cm]{Plots/CelebA160_supp/im_grid_data_x_repeat3}}\\
\subfloat[SRFlow]{\includegraphics[height=5.5cm]{Plots/CelebA160_supp/im_grid_srflow_repeat3}}~~
\subfloat[CSGM-C $N=10$]{\includegraphics[height=5.5cm]{Plots/CelebA160_supp/im_grid_csgm_repeat3}}~~
\subfloat[CDSB-C $N=10$]{\includegraphics[height=5.5cm]{Plots/CelebA160_supp/im_grid_cdsb_repeat3}}
\par\end{centering}
\caption{\label{fig:imagecomparison-celeba160-supp3}Additional uncurated samples for the CelebA 8x SR task.}
\end{figure}

\subsection{Optimal Filtering in State-Space Models}
For the sake of completeness, we first give details of the Lorenz-63
model here. It is defined for $x\in\mathbb{R}^3$ under the following ODE
system
\[
\frac{\rmd x[1]}{\rmd \tau}=\sigma(x[2]-x[1]),\quad\frac{\rmd x[2]}{\rmd \tau}=x[1](\rho-x[3])-x[2],\quad\frac{\rmd x[3]}{\rmd \tau}=x[1]x[2]-\theta x[3].
\]
We consider the values $\sigma=10$, $\rho=28$ and $\theta=8/3$,
which results in chaotic dynamics famously known as the Lorenz
attractor. We integrate this system using the 4th order Runge--Kutta
method with step size 0.05. For the state-space model, we define $(X_{t})_{t\geq1}$
as the states $(x[1],x[2],x[3])$ of the system at regular intervals of $\delta \tau=0.1$
with small Gaussian perturbations of mean 0 and variance ${10}^{-4}$,
and $(Y_{t})_{t\geq1}$ as noisy observations of $(X_{t})_{t\geq1}$
with Gaussian noise of mean 0 and variance 4. 
More explicitly,
the transition density is thus defined for $x_t=(x_t[1],x_t[2],x_t[3])\in\mathbb{R}^3$ as 
\[
f(x_{t}|x_{t-1})=\mathcal{N}(x_{t};\textup{RK4}(x_{t-1},0.1),{10}^{-4}\Id),\quad g(y_{t}|x_{t})=\mathcal{N}(y_{t};x_{t},4\Id),
\]
 where $\textup{RK4}(x_{t},0.1)$ is the 4th order Runge--Kutta operator
(with step size 0.05) for the Lorenz-63 dynamics with initial condition
$x_{t}$ and termination time 0.1. 

We run the model for 4,000 time steps and perform Bayesian filtering
for the last 2,000 time steps. To accelerate the sequential inference
process, we use linear regression in this example to fit $\mathbf{F},\mathbf{B}$
with nonlinear feature expansion using radial basis functions. Similar
to \citet{spantini2019coupling}, we experiment with the number of
nonlinear features from 1 to 3 RBFs, in addition to the linear feature.
We find that as the ensemble size $M$ increases, increasing the number 
of features is helpful for lowering filtering errors, suggesting that
bias-variance tradeoff is at play. 

Since the system's dynamics are chaotic and can move far from the
origin and display different scaling for each dimension, it is not
suitable to choose $\pref(x)=\mathcal{N}(x;0,\Id)$.
Therefore, for CSGM and CDSB, we let $\pref(x)=\mathcal{N}(x;\mu_{\textup{ref}},\textup{diag}(\sigma_{\textup{ref}}^{2}))$
where $\mu_{\textup{ref}},\sigma_{\textup{ref}}^{2}$ are the estimated mean and variance
of the prior predictive distribution $p(x_{t}|\yobs_{1:t-1})$ at time $t$.
For CSGM-C and CDSB-C, we let $\pref(x|y)=\mathcal{N}(x;\mu_{\textup{ref}},\textup{diag}(\sigma_{\textup{ref}}^{2}))$
where the estimated posterior mean and variance are returned by EnKF.
Furthermore, we scale the diffusion process's time step dimensionwise
by the variance of the reference measure $\sigma_{\textup{ref}}^{2}$. We consider
a short diffusion process with $N=20$, and a long diffusion process
with $N=100$. We let $\gamma_{\textup{min}}=0.0005\cdot\sigma_{\textup{ref}}^{2}$
and $\gamma_{\textup{max}}=0.05\cdot\sigma_{\textup{ref}}^{2}$ for the short
diffusion process, and reduce $\gamma_{\textup{max}}$ by a half for
the long diffusion process.  

We report the RMSEs between each algorithm's filtering means and the ground truth filtering means in Table \ref{tab:filtering}. We compute the ground truth filtering means using a particle filter with $M={10}^6$ particles. In addition, we report the RMSEs between each algorithm's filtering means and the true states $x_{1:T}$ in Table \ref{tab:filteringa}, and between each algorithm's filtering standard deviations and the ground truth standard deviations in Table \ref{tab:filteringb}. Similarly, we observe that CDSB and CDSB-C achieve lower errors than CSGM and EnKF. Interestingly, CSGM-C performs similarly well as CDSB-C for state estimation when $N=100$ steps, but performs worse for standard deviation estimation.  In the case where the ensemble size $M=200$, however, when using the long diffusion process we observe occasional large errors for CDSB and CDSB-C. We conjecture that since CDSB is an iterative algorithm, inevitably small errors in regression can be accumulated. For small ensemble size and large number of diffusion steps, the model may thus be more prone to overfitting. However, for larger ensemble size $M\geq500$ we do not observe this issue.

\tabcolsep=0.25cm
\begin{table}
\begin{centering}
\subfloat[\label{tab:filteringa}]{

    \begin{tabular}{|c|c|c|c|c|}
    \hline 
    $M$ & 200 & 500 & 1000 & 2000\tabularnewline
    \hline 
    \hline 
    EnKF & .476\textpm .010 & .474\textpm .005 & .475\textpm .005 & .475\textpm .003\tabularnewline
    \hline 
    CSGM (short) & \multicolumn{4}{c|}{Diverges}\tabularnewline
    \hline 
    CDSB (short) & .464\textpm .013 & .391\textpm .010 & .369\textpm .007 & .352\textpm .008\tabularnewline
    \hline 
    CSGM-C (short)& \multicolumn{4}{c|}{Diverges}\tabularnewline
    \hline 
    CDSB-C (short) & \textbf{.428\textpm .016} & \textbf{.378\textpm .012} & \textbf{.359\textpm .015} & \textbf{.340\textpm .007}\tabularnewline
    \hline 
    \hline 
    CSGM (long) & \textbf{.431\textpm.010} & .376\textpm .008 & .360\textpm .012 & .343\textpm .006\tabularnewline
    \hline 
    CDSB (long) & .582\textpm.328 & .370\textpm .012 & .348\textpm .006 & .333\textpm .006\tabularnewline
    \hline 
    CSGM-C (long) & .434\textpm.057 & \textbf{.367\textpm.011}  & .346\textpm.008 & .336\textpm.004\tabularnewline
    \hline 
    CDSB-C (long) & .660\textpm.310 & .368\textpm .016 & \textbf{.344\textpm .010} & \textbf{.331\textpm .006}\tabularnewline
    \hline 
    \end{tabular}

} \qquad
\subfloat[\label{tab:filteringb}]{

    \begin{tabular}{|c|c|c|c|c|}
    \hline 
    $M$ & 200 & 500 & 1000 & 2000\tabularnewline
    \hline 
    \hline 
    EnKF & .255\textpm.003 &  .286\textpm .002 & .296\textpm .001 & .300\textpm .003\tabularnewline
    \hline 
    CSGM (short)& \multicolumn{4}{c|}{Diverges}\tabularnewline
    \hline 
    CDSB (short)& .203\textpm.005 &  .167\textpm .003 & .150\textpm .002 & .137\textpm .002\tabularnewline
    \hline 
    CSGM-C (short)& \multicolumn{4}{c|}{Diverges}\tabularnewline
    \hline 
    CDSB-C (short)& \textbf{.148\textpm.004} &  \textbf{.124\textpm .002} & \textbf{.108\textpm .002} & \textbf{.099\textpm .001}\tabularnewline
    \hline 
    \hline 
    CSGM (long) & .204\textpm.005 &  .163\textpm .008 & .140\textpm .002 & .129\textpm .001\tabularnewline
    \hline 
    CDSB (long) & \textbf{.140\textpm.008} &  .129\textpm .003 & .123\textpm .003 & .120\textpm .002\tabularnewline
    \hline 
    CSGM-C (long) & .186\textpm.005 & .142\textpm.003 & .120\textpm.001 & .109\textpm.002\tabularnewline
    \hline 
    CDSB-C (long) & .176\textpm.006 &  \textbf{.120\textpm .002} & \textbf{.110\textpm .003} & \textbf{.106\textpm .002}\tabularnewline
    \hline 
    \end{tabular}

}
\par\end{centering}
\caption{\label{tab:filtering-supp}RMSEs over 10 runs between (a) each algorithm's filtering means
and the true states $x_{1:T}$ for $N=20$ (short) and $N=100$ (long); (b) each algorithm's filtering standard deviations and the ground truth filtering standard deviations. 
The lowest errors are highlighted in bold.}
\end{table}

\end{document}